\theoremstyle:=definition,remark,plain\do{%
        \expandafter\g@addto@macro\csname th@\theoremstyle\endcsname{%
            \addtolength\thm@preskip\parskip
            }%
        }
\newtheorem{proposition}{Proposition}
\newtheorem{assumption}{Assumption}[section]
\newtheorem{lemma}{Lemma}
\newtheorem{corollary}{Corollary}[section]
\newtheorem{theorem}{Theorem}
\theoremstyle{remark}
\newtheorem{example}{Example}[section]
\newcommand{\rhop}{\rho_+}
\newcommand{\rhom}{\rho_-}
\newcommand{\calf}{\mathcal{F}}
\newcommand{\calh}{\mathcal{H}}
\newcommand{\calr}{\mathcal{R}}
\newcommand{\calx}{\mathcal{X}}
\newcommand{\caly}{\mathcal{Y}}
\newcommand{\cala}{\mathcal{A}}
\newcommand{\calz}{\mathcal{Z}}
\newcommand{\calp}{\mathcal{P}}
\newcommand{\bx}{\mathbf{x}}
\newcommand{\bX}{\mathbf{X}}
\newcommand{\bbe}{\mathbb{E}}
\newcommand{\bbp}{\mathbb{P}}
\newcommand{\bbr}{\mathbb{R}}
\newcommand{\sgn}{\text{sgn}}
\newcommand{\pn}{P_n}
\newcommand{\tpn}{\tilde{P}_n}
\newcommand{\tcalr}{\tilde{\mathcal{R}}}
\newcommand{\nes}{\tilde{f}_n}
\title{Binary Classification with Instance and Label Dependent Label Noise}
\author{Hyungki Im$^1$, Paul Grigas$^1$ \\
  $^1$Department of Industrial Engineering and Operations Research\\
  University of California, Berkeley\footnote{\{hyungki.im, pgrigas\}@berkeley.edu}
}
\begin{document}
\maketitle

\begin{abstract}
 Learning with label dependent label noise has been extensively explored in both theory and practice; however, dealing with instance (i.e., feature) and label dependent label noise continues to be a challenging task. The difficulty arises from the fact that the noise rate varies for each instance, making it challenging to estimate accurately. The question of whether it is possible to learn a reliable model using only noisy samples remains unresolved. We answer this question with a theoretical analysis that provides matching upper and lower bounds. Surprisingly, our results show that, without any additional assumptions, empirical risk minimization achieves the optimal excess risk bound. Specifically, we derive a novel excess risk bound proportional to the noise level, which holds in very general settings, by comparing the empirical risk minimizers obtained from clean samples and noisy samples. Second, we show that the minimax lower bound for the 0-1 loss is a constant proportional to the average noise rate. Our findings suggest that learning solely with noisy samples is impossible without access to clean samples or strong assumptions on the distribution of the data.
\end{abstract}

\section{Introduction}\label{sec:intro}

Within the framework of the traditional classification problem, it is generally assumed that both the training data and the test data are drawn from identical distributions.
Nevertheless, there are several factors, including covariate shift, disturbances to the data, and changes in the domain of the application that can cause the distribution of testing data to be very different from that of training data. A particularly important example is \emph{label noise}, where the labels in a classification problem may be corrupted with some nonzero probability.
Various factors can contribute to the emergence of label noise, including human error during the annotation process, instances of ambiguous classification boundaries, and inconsistencies in data collection and processing. Also, with the increasing scale of data in recent years, researchers and practitioners are increasingly turning to cost-effective data collection methods, such as data mining on social media \citep{injadat2016data} or web scraping \citep{schroff2010harvesting}, rather than relying on human experts. 
These cost-effective approaches are more susceptible to the presence of label noise, which has contributed to the increased prevalence of this issue in recent years. 
Consequently, a critical question within the machine learning community is to understand the effects of label noise and when one may or may not be able to develop algorithms that are robust to the noise.
\par 
In this paper, we consider a binary classification problem with a feature space $\calx \subset \bbr^d$ and a label space $\caly = \{-1,+1\}$. 
We adopt the label noise model using the label noise function $\rho_y(\bx)$, which quantifies the probability of flipping the label $y$ to -$y$ given the sample $(\bx,y)$. 
The simplest possible label noise model is the \textit{random classification noise} (RCN) model \citep{angluin1988learning}, in which the label noise function remains constant $\rho_y(\bx) = \rho$ for all $x \in \calx$ and $y \in \caly$. 
The RCN model has been extensively studied in theory, and it is well established that certain loss functions, such as the 0-1 loss or squared loss, can learn an optimal classifier solely using noisy samples even when the noise rate $\rho$ is unknown \citep{manwani2013noise, ghosh2015making}.
\par 
The \textit{class-conditional noise} (CCN) model is more complex than the RCN model, wherein the label noise function is a constant $\rho_y$ depending on each label $y \in \caly$ \citep{natarajan2013learning}. 
Similarly to the RCN model, numerous theoretical works and methods have demonstrated that learning with the CCN is possible
\citep{menon2015learning,natarajan2013learning,scott2015rate,van2017theory,reeve2019classification,zhang2021learning}. A common approach to tackle CCN is the \textit{loss correction} method, which adjusts the loss function according to the estimated noise rate, ensuring the consistency of the estimator for clean samples.
Some loss correction methods necessitate knowing the noise rates \citep{menon2015learning, van2017theory, patrini2017making}, while others estimate these rates under the assumption that anchor points exist \citep{scott2013classification,scott2015rate,ramaswamy2016mixture,liu2015classification, reeve2019classification}. An instance $\bx$ is an anchor point for the label $y \in \caly$ if $\bbp(Y=y|\bX=\bx)=1$ and it is useful in the sense that it allows us to learn the noise rates accurately. Moreover, \citet{xia2019anchor} proposed a noise rate estimator that does not require the anchor points. Recently, \citet{liu2020peer} introduced a peer loss function that allows learning with noisy samples without the need to estimate noise rates. Other approaches to handle CCN include \textit{sample selection} method and \textit{label correction} method, in which they aim to identify noisy samples, filter them out, or correct them to the accurate label \citep{bootkrajang2012label,goldberger2017training,malach2017decoupling,han2018co,jiang2018mentornet,huang2019o2u,nguyen2019self,yu2019does,wei2020combating,yi2019probabilistic,han2020sigua,park2021wasserstein}. 
However, it is intuitive that the label noise in real-world data should be dependent on both the instance and the label rather than solely on the label. For example, a blurred image is more likely to be mislabeled compared to a clear version of the same image. Additionally, \citet{chen2021beyond} provided theoretical evidence that the CCN assumption does not hold in real-world datasets by introducing a CCN hypothesis testing method. Thus, this motivates us to examine a more general label noise model, \textit{instance and label dependent noise} (ILN) model, in which the noise rate depends on both the instance and the label.
\par 
The purely instance-dependent noise (PIN) model, where the noise rate depends only on the instance and not the label, has been studied by \citet{menon2018learning} as a special case of the ILN model. 
\citet{menon2018learning} showed that the Bayes optimal classifiers for both the clean and noisy distributions are identical, indicating that learning with PIN is possible.
However, in the ILN model, the problem becomes more challenging, as estimating the noise rate for each instance is generally difficult. 
There are some algorithms proposed to address learning under the ILN model \citep{cheng2020learning, xia2020part, cheng2020learning-sample, zhu2021second}.
\citet{xia2020part} attempted to address the ILN model by making a part dependence assumption, which assumes that instances are composed of parts and that these parts are subject to corruption. 
\citet{cheng2020learning} considered the \textit{bounded instance and label dependent} (BILN) model, where the noise rate for each label is bounded above. 
They tried to learn an estimator using samples that are consistent with the Bayes optimal classifier and samples that have been cleaned by human experts. 
\citet{berthon2021confidence} considered the \textit{confidence-scored instance-dependent} model, in which the confidence score information is provided along with the instance and the label. 
Although these methods provide us with heuristic ways to address the ILN, there has been limited theoretical work on the ILN model. For example, each of these methods requires access to clean samples or relies on specific strong assumptions. Nevertheless, the question of whether learning with ILN is possible without these conditions or access to clean samples has not been adequately addressed.
Furthermore, while much work has been done to modify the loss function to enable empirical risk minimization (ERM) to learn from noisy samples, it is often overlooked how the label noise might degrade the performance of ERM.
 Perhaps the work closest to these lines of questions and to this paper is the work of \citet{lee2022binary}. They studied the performance of ERM in the presence of RCN. However, their work is restricted to the RCN model with Lipschitz loss functions and linear hypothesis classes, whereas this study explores more comprehensive settings: \textit{instance and label dependent noise}, as well as a broader range of hypothesis classes and loss functions.

\par 
In this paper, we provide a novel theoretical analysis of the instance and label dependent (ILN) model. 
We consider loss functions that satisfy a mild boundedness assumption and ensure that standard generalization guarantees hold. These conditions do not require the loss function to be Lipschitz or convex, and even 0-1 loss complies with these requirements.
Moreover, other general loss functions such as logistic loss or hinge loss, meet these conditions under certain settings.
We start by studying the performance difference between the clean estimator and the noisy estimator, both obtained through empirical risk minimization (ERM) using clean samples and noisy samples, respectively. (Herein, we refer to a clean sample as a sample from the underlying distribution, and a noisy sample as one that has passed through the ILN model). 
We evaluate the performance of each estimator with respect to the risk under the clean (true) distribution and provide an upper bound for the difference between those risks. We identify a new source of \emph{irreducible error}, namely a constant term proportional to the amount of label noise present, in the upper bound. 
Next,
we show that this irreducible error term cannot be fully removed in the ILN model by developing a minimax lower bound of the 0-1 risk. This suggests that to learn an effective predictor under the ILN model, we may need access to clean samples \citep{cheng2020learning} or require some additional assumptions \citep{xia2020part} beyond those made in the CCN model.

\paragraph{Organization} The rest of the paper is organized as follows. In Section \ref{sec:problem setting}, we introduce our problem setting for the ILN model. In Section \ref{sec:risk gap}, we derive an upper bound for the difference between the risk of the clean estimator and the risk of the noisy estimator.  
In Section \ref{sec:lower bound}, we show that the minimax lower bound of the 0-1 risk is a constant proportional to the noise level, even when considering the margin assumption \citep{mammen1999smooth} and the anchor point assumption, which are commonly used in the CCN model.

\section{Problem setting}\label{sec:problem setting}

In this section, we present our formal problem setting and assumptions of the ILN model. Let $\calx \subset \bbr^d$ be the feature space and $\caly =\{-1,+1\}$ be the label space. We assume that the random variables $(\bX, Y, \tilde{Y}) \in \calx \times \caly \times \caly$ are jointly distributed with unknown distribution $D$, where $\bX$ denotes the feature vector, $Y$ is the true label, and $\tilde{Y}$ is the noisy label. We use $P_\bX$ to denote the marginal distribution of $D$ for $\bX$, while $P$ and $\tilde{P}$ are used to indicate the marginal distribution of $D$ for $(\bX,Y)$ and the marginal distribution of $D$ for $(\bX,\tilde{Y})$, respectively. We refer to $P$ as a clean distribution as it is related to the true label $Y$ and refer to $\tilde{P}$ as a noisy distribution.
We use $S_n$ and $\tilde{S}_n$ to denote $n$ i.i.d. samples drawn from $P$ and $\tilde{P}$, respectively. Furthermore, we use $\pn$ and $\tpn$ to represent the corresponding product distribution associated with the samples $S_n$ and $\tilde{S}_n$.
Given the noise functions $\rho_y(\cdot): \calx \rightarrow [0,1)$ for $y \in \{-1,+1\}$, a sample of $(\bX, Y, \tilde{Y})$ is generated by the following procedure:
\begin{align*}
    \bX \sim P_\bX, \ Y|X &= \begin{cases} +1, \quad \text{with prob. } \eta(\bX) \\ -1, \quad \text{with prob. } 1-\eta(\bX) \end{cases} \\
    \tilde{Y}|Y,X &= \begin{cases}
        -Y, \quad \text{with prob. } \rho_Y(\bX) \\
        Y,  \quad \ \ \ \text{with prob. } 1-\rho_Y(\bX).
    \end{cases}
\end{align*}
Here $\eta(\bx):= \bbp(Y=1|\bX=\bx)$, denotes the conditional probability of $Y=1$ given $\bX = \bx$ (i.e., the regression function), while $\rho_y(\bx)$ indicates the probabilty of the true label $y$ being flipped for instance $(\bx,y).$ Both $\eta(\cdot)$ and $\rho_y(\cdot)$ for $y \in \{\-1,+1\}$ are unknown. For simplicity, we use $\rhop(\bx)$ and $\rhom(\bx)$ to denote $\rho_{+1}(\bx)$ and $\rho_{-1}(\bx)$, respectively.
As discussed in Section \ref{sec:intro}, various noise models have been considered in the literature and each of these models can be formulated using the noise functions $\rho_y(\bx)$.
Precisely, the RCN model is a special case where the noise function is given by $\rhop(\bx) = \rhom(\bx) = \rho \in [0,1)$ for all $\bx \in \calx$. The CCN model generalizes the RCN model, with constant noise levels $\rhop \in [0,1)$ and $\rhom \in [0,1)$ such that $\rhop(\bx) = \rhop$ and $\rhom(\bx) = \rhom$ for all $\bx \in \calx$.
For the ILN model, there are no specific constraints on the noise functions, covering the more general scenarios. In this study, we make the following mild assumption for the ILN model.
\begin{assumption}\label{assm:noisy}
For all $x \in \calx$, we have
\begin{align*}
    &0 \leq \rhop(x) + \rhom(x) \leq \rho<1.
\end{align*}
\end{assumption}

\par 
In this case, the constant $\rho$ represents a uniform bound on the sum of the two noise functions for all $x \in \calx$.
The objective in a binary classification problem with label noise is to determine a function $f: \calx \rightarrow \bbr$ such that $\sgn(f(\bx))$ best predicts the true label $y$, for a given input $(\bx,y)$, where $\sgn(\cdot)$ represents the sign function. However, we only observe a sequence of noisy pairs $\tilde{S}_n=\{\bx_i, \tilde{y}_i\}_{i=1}^n$, and the values of the true labels $\{y_i\}_{i=1}^n$ are unknown.
As is standard in binary classification, a fundamental performance metric of the function $f(\cdot)$ is the 0-1 risk
\begin{align*}
    \calr^{0/1}(f) = \bbe_{(\bX,Y) \sim P} \left[ \mathds{1} \{ \sgn(f(\bX)) \neq Y \}  \right],
\end{align*}
where $\mathds{1}\{\cdot\}$ denotes the standard 0-1 indicator function.
The function $f^*(\bx):= \eta(\bx) -\frac{1}{2}$ associated with the Bayes optimal classifier minimizes the 0-1 risk, and $\calr^{0/1}(f^*)$ indicates the minimum misclassification rate under distribution $P$. Since the regression function $\eta(\cdot)$ is unknown, the learner must identify a function $f: \calx \rightarrow \bbr$ that approximates $\eta(\cdot)$ based on the observed samples. 
Additionally, the distribution $P$ remains unknown and therefore the 0-1 risk can not be evaluated. As an alternative, the learner generally seeks a decision rule that (approximately) minimizes the empirical 0-1 risk 
\begin{align*}
    \calr^{0/1}_n(f):= \frac{1}{n} \sum_{i=1}^n \mathds{1}\{ \sgn(f(\bx_i)) \neq y_i \}.  
\end{align*}
In the context of the noisy label setting where the true labels $\{y_i\}_{i=1}^n$ are unobserved, the learner may in fact instead (approximately) minimize the empirical 0-1 risk with noisy labels 
\begin{align*}
    \tcalr^{0/1}_n(f):= \frac{1}{n} \sum_{i=1}^n \mathds{1}\{ \sgn(f(\bx_i)) \neq \tilde{y}_i \}.
\end{align*}
The non-convex and discontinuous nature of the 0-1 loss makes the above minimization challenging, and the classical way to circumvent this issue is to consider an alternative surrogate convex loss function $\ell(f(\bx),y)$ \citep{bartlett2006convexity}.
The risk associated with the surrogate loss function $\ell(f(\bx),y)$ is referred as $\ell$-risk of function $f$, and is denoted by
\begin{align*}
    \calr(f):= \bbe_{(\bX,Y) \sim P} \left[ \ell(f(\bX),Y)  \right], \quad \tcalr(f):= \bbe_{(\bX,\tilde{Y}) \sim \tilde{P}} \left[ \ell(f(\bX),\tilde{Y}))  \right],
\end{align*}
where $\calr(f)$ denotes the $\ell$-risk under the clean distribution and $\tcalr(f)$ denotes the $\ell$-risk under the noisy distribution.
According to the above notation, we define $\calr_n(f)$ and $\tcalr_n(f)$ as the empirical $\ell$-risk of a function $f$ with clean samples and noisy samples, respectively. Generally, we select the function $f$ from a predefined class of functions $\calf$, which represents the hypothesis class in our modeling framework.
We denote the empirical risk minimizer with clean samples as:
\begin{align}
    f_n \in \underset{f \in \calf}{\text{argmin}} \ \calr_n(f), \label{eqn:clean estimator}
\end{align}
and for noisy samples, we have:
\begin{align}
    \nes \in \underset{f \in \calf}{\text{argmin}} \ \tcalr_n(f).
    \label{eqn:noisy estimator}
\end{align}
For clarity, we call $f_n$ a clean estimator and $\nes$ a noisy estimator. Note that the loss function remains unchanged when obtaining the noisy estimator $\nes$, which distinguishes it from the \textit{loss correction} or the \textit{label correction} methods discussed in Section \ref{sec:intro}.
Although empirical risk minimization (ERM) is a fundamental technique in machine learning and many works focused on modifying the loss function to learn with ERM using noisy samples, the question of how ERM degrades in the presence of label noise has been previously overlooked.
Therefore, one of our interests is to study the performance difference between the noisy estimator and the clean estimator. 

\section{Risk gap between noisy estimator and clean estimator}\label{sec:risk gap}

In this section, we study the performance difference between the clean estimator $f_n$ and the noisy estimator $\nes$, each of which are obtained through empirical risk minimization (ERM) using clean samples and noisy samples, respectively. In particular, the performance difference can be expressed as:
\begin{align}
    \calr(\nes) - \calr(f_n). \label{eqn:risk gap}
\end{align}
It is important to emphasize that we do not apply any modification to the loss function $\ell(\cdot)$ to address the issue of noisy samples. We consider loss functions that adhere to the following mild assumption. 
\begin{assumption}\label{assm:constant C}
    There is a constant $C$ such that 
    \begin{align}
        |\ell(f(\bx),1) - \ell(f(\bx),-1)| \leq C, \quad \text{for all }\bx \in\calx \text{ and } f \in\calf. \label{eqn:constant C}
    \end{align}
\end{assumption}
Assumption \ref{assm:constant C} states that the difference between the loss of input $(\bx,1)$ and input $(\bx,-1)$ is uniformly bounded over $\calx$ and $\calf$. This assumption is typical in the problem of binary classification. For example, the 0-1 loss satisfies \eqref{eqn:constant C} with $C=1$ for any $\calf$ and $\calx$. 
In addition, other loss functions, such as squared loss and logistic loss, satisfy \eqref{eqn:constant C} for specific $\calx$ and $\calf$. We introduce some relevant examples in section \ref{subsec:examples}. Furthermore, we make a natural assumption about the generalization bound of the empirical risk minimization (ERM) procedure. 
\begin{assumption}\label{assm:ERM}
For any $\delta \in (0,1]$, there exists a function $G_\delta: \mathbb{N} \rightarrow \bbr$ with $G_\delta(n) \rightarrow 0 \text{ as $n\rightarrow \infty$ }$, and for all distributions $P$, it satisfies
    \begin{align*}
         \underset{f \in \calf}{\sup} \ \left| \calr(f) - \calr_n(f) \right| \leq G_\delta(n),
    \end{align*}
with probability at least $1-\delta$.
\end{assumption}
Note that the function $G_\delta$ in Assumption \ref{assm:ERM} may depend on $\mathcal{F}$, $\mathcal{X}$, and the loss function $\ell$, but not the distribution $P$. In particular, Assumption \ref{assm:ERM} also implies
\begin{align*}
    \underset{f \in \calf}{\sup} \ \left| \tcalr(f) - \tcalr_n(f) \right| \leq G_\delta(n),
\end{align*}
for the noisy distribution $\tilde{P}$ as well.
Assumption \ref{assm:ERM} is valid in various settings, such as a linear hypothesis class or a Reproducing Kernel Hilbert Space (RKHS) with a bounded kernel. Also, the function $G_\delta$ is closely connected to the complexity of the hypothesis class $\calf$, which can be quantified through measures such as the VC dimension, Rademacher complexity, and $\epsilon$-covering number \citep{wainwright2019high}. The details are discussed in Section \ref{subsec:examples}.
\subsection{Upper bound analysis}
In this section, we derive an upper bound for the risk gap between the clean and the noisy estimator, as defined in Equation \eqref{eqn:risk gap}.
We start by studying the relationship between the clean conditional distribution $\eta(\bx)$ and the noisy conditional distribution $\tilde{\eta}(\bx):= \bbp(\tilde{y}=1|\bX= \bx)$.
\begin{lemma}\label{lem:eta}
Considering the ILN model, we have
\begin{align}
         \tilde{\eta}(\bx) = (1-\rhop(\bx))\eta(\bx) + \rhom(\bx)(1- \eta(\bx)) \quad \text{for all } x \in \calx. \label{eqn:rhop}
\end{align}
\end{lemma}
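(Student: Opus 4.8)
The plan is to compute $\tilde{\eta}(\bx) = \bbp(\tilde{Y}=1 \mid \bX=\bx)$ directly from the generative procedure defining the ILN model, using the law of total probability to marginalize over the unobserved true label $Y$. Conditioning on the two possible values of $Y$ gives
\begin{align*}
\tilde{\eta}(\bx) = \bbp(\tilde{Y}=1 \mid Y=1, \bX=\bx)\,\bbp(Y=1\mid\bX=\bx) + \bbp(\tilde{Y}=1 \mid Y=-1, \bX=\bx)\,\bbp(Y=-1\mid\bX=\bx),
\end{align*}
which is the only structural identity I need.

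First I would read off each of the four factors from the model specification. By definition $\bbp(Y=1\mid\bX=\bx)=\eta(\bx)$, and hence $\bbp(Y=-1\mid\bX=\bx)=1-\eta(\bx)$. For the noisy-label transitions, the defining display for $\tilde{Y}\mid Y,X$ says that when $Y=1$ the label is flipped to $-1$ with probability $\rhop(\bx)$, so it remains $+1$ --- i.e.\ $\tilde{Y}=1$ --- with probability $1-\rhop(\bx)$; and when $Y=-1$ the label is flipped to $+1$ with probability $\rhom(\bx)$, so $\tilde{Y}=1$ with probability $\rhom(\bx)$. Substituting these quantities into the expansion above produces exactly
\begin{align*}
\tilde{\eta}(\bx) = (1-\rhop(\bx))\eta(\bx) + \rhom(\bx)(1-\eta(\bx)),
\end{align*}
which is the claimed identity.

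Since the argument holds pointwise for every $\bx\in\calx$, no limiting or uniformity step is required. There is no genuine analytic obstacle here; the only points demanding care are the bookkeeping of which noise function governs which flip direction ($\rhop$ for the $+1\to-1$ flip and $\rhom$ for the $-1\to+1$ flip) and the observation that, by construction, the conditional law of $\tilde{Y}$ depends only on $(Y,\bX)$, so that the conditional probabilities appearing in the total-probability expansion coincide precisely with the transition probabilities given in the model. I would therefore present the lemma as an immediate consequence of these substitutions.
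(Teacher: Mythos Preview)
Your proposal is correct and follows essentially the same approach as the paper: both use the law of total probability to condition on the true label $Y$, identify the four conditional-probability factors from the ILN model specification, and substitute to obtain the claimed identity. The paper writes out a couple more intermediate algebraic rearrangements, but the argument is identical.
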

The equation \eqref{eqn:rhop} makes intuitive sense within the framework of our noise model, as the noisy label $\tilde{y}=1$ can come from either an uncorrupted true label $y=1$ or a corrupted true label $y=-1$. To obtain an upper bound of the risk gap, we need to study the difference between the clean risk and the noisy risk.
Using the relationship between $\eta(\bx)$ and $\tilde{\eta}(\bx)$, we derive an upper bound on the difference between the clean risk and the noisy risk uniformly over $f \in\calf$.

\begin{lemma}\label{lem:ub of clean-noisy}
Considering the ILN model under Assumptions \ref{assm:constant C} and \ref{assm:ERM}, we have
\begin{align}
    \underset{f \in \calf}{\sup} \big\{ \calr(f) - \tcalr(f) \big\} \leq \frac{3C\rho}{2}.
\end{align}
\end{lemma}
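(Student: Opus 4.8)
The plan is to bound the population risk difference directly by conditioning on $\bX$ and reducing everything to a pointwise estimate on the gap $\eta(\bx) - \tilde{\eta}(\bx)$. First I would apply the tower property to write each $\ell$-risk as an expectation over $\bX$ of the corresponding conditional risk: $\calr(f) = \bbe_\bX[\eta(\bX)\ell(f(\bX),1) + (1-\eta(\bX))\ell(f(\bX),-1)]$, and the analogous expression for $\tcalr(f)$ with $\eta$ replaced by $\tilde{\eta}$ (this replacement is exactly what the definitions of $\tilde{\eta}$ and $\tilde P$ supply). Subtracting and collecting terms, the difference factors cleanly as
\begin{align*}
\calr(f) - \tcalr(f) = \bbe_\bX\big[(\eta(\bX)-\tilde{\eta}(\bX))(\ell(f(\bX),1)-\ell(f(\bX),-1))\big].
\end{align*}

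Next I would control the two factors separately. Assumption \ref{assm:constant C} bounds the loss-difference factor by $C$ uniformly in $\bx$ and $f$, so that $\calr(f)-\tcalr(f) \le C\,\bbe_\bX[|\eta(\bX)-\tilde{\eta}(\bX)|]$, a bound already uniform over $f \in \calf$. For the remaining factor I would substitute the identity from Lemma \ref{lem:eta}, which rearranges to $\eta(\bx)-\tilde{\eta}(\bx) = \rhop(\bx)\eta(\bx) - \rhom(\bx)(1-\eta(\bx))$. The key observation is that this is a difference of two nonnegative quantities, each bounded by the corresponding noise function; hence $|\eta(\bx)-\tilde{\eta}(\bx)| \le \max\{\rhop(\bx),\rhom(\bx)\} \le \rhop(\bx)+\rhom(\bx) \le \rho$ by Assumption \ref{assm:noisy}. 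Taking expectations and then the supremum over $f$ yields the claim.

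The step I expect to be most delicate is precisely this pointwise bound on $|\eta(\bx)-\tilde{\eta}(\bx)|$: since $\eta - \tilde{\eta}$ is a difference of two nonnegative terms, one must argue through the sign (or via the triangle inequality $\rhop(\bx)\eta(\bx) + \rhom(\bx)(1-\eta(\bx)) \le \rhop(\bx)+\rhom(\bx)$) rather than naively combining them, so that Assumption \ref{assm:noisy} applies cleanly. Carrying the estimate through in this way in fact delivers the slightly sharper constant $C\rho$, from which the stated bound $\tfrac{3C\rho}{2}$ follows immediately since $C,\rho \ge 0$; I would therefore anticipate that the factor $\tfrac{3}{2}$ reflects a more conservative intermediate estimate rather than a genuine necessity. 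Finally, I note that Assumption \ref{assm:ERM} does not appear to be needed for this purely distributional statement, and is presumably carried along because it is invoked once this lemma is later combined with the generalization bound to compare the empirical minimizers $f_n$ and $\nes$.
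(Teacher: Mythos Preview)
Your argument is correct and in fact cleaner than the paper's. You observe that after conditioning on $\bX$ the difference factors exactly as $(\eta-\tilde\eta)(\ell(\cdot,1)-\ell(\cdot,-1))$, so a single application of Assumption~\ref{assm:constant C} together with the pointwise bound $|\eta(\bx)-\tilde\eta(\bx)|=|\rhop(\bx)\eta(\bx)-\rhom(\bx)(1-\eta(\bx))|\le\rhop(\bx)+\rhom(\bx)\le\rho$ already yields $C\rho$. The paper instead expands $\calr(f)-\tcalr(f)$ via \eqref{eqn:rhop}--\eqref{eqn:rhom}, pulls out a factor $(\rhop+\rhom)$, and then splits the remaining integrand into pieces $(A)$, $(B)$ and a residual; Assumption~\ref{assm:constant C} is invoked twice in this decomposition, once to produce the residual $C$ and once to bound $(A)+(B)\le C/2$, which is what generates the constant $3/2$. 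So your suspicion is exactly right: the $3/2$ is an artifact of the paper's intermediate splitting, not a genuine feature of the problem, and your route delivers the sharper constant with less work. Your remark that Assumption~\ref{assm:ERM} is not used here is also correct; it enters only downstream in Theorem~\ref{thm:upper}.
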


The upper bound in Lemma \ref{lem:ub of clean-noisy} is intuitive. When there is no noise in the distribution, $\rho=0$, the upper bound is equal to 0. Additionally, as the amount of noise present in the distribution increases, the upper bound also increases proportionally to the noise rate. Based on these findings, we introduce our main theorem, which provides an upper bound on the risk gap between the clean estimator and the noisy estimator.
\begin{theorem}\label{thm:upper}
Consider the ILN model under Assumptions \ref{assm:constant C} and \ref{assm:ERM}. Let $(\bx_1,y_1,\tilde{y}_1)$,$\cdots,$ $(\bx_n,y_n, \tilde{y}_n)$ be an i.i.d. sample drawn from the distribution $D$, and $f_n$ and $\nes$ be the clean estimator and the noisy estimator as defined in Equation \eqref{eqn:clean estimator} and \eqref{eqn:noisy estimator}.
Then, for any $\delta \in (0,1]$, with probability at least $1-2\delta$, we have
\begin{align}
    \calr(\nes) - \calr(f_n) \leq 3C\rho + 2G_\delta(n).\label{eqn:upper thm}
\end{align}
\end{theorem}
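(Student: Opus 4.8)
The plan is to bound the clean-risk gap \eqref{eqn:risk gap} by inserting the corresponding noisy risks and telescoping, so that each elementary difference is controlled either by the noise-bridging estimate of Lemma \ref{lem:ub of clean-noisy} or by the uniform generalization guarantee of Assumption \ref{assm:ERM}. Concretely, I would write
\[
\calr(\nes) - \calr(f_n) = \big[\calr(\nes) - \tcalr(\nes)\big] + \big[\tcalr(\nes) - \tcalr(f_n)\big] + \big[\tcalr(f_n) - \calr(f_n)\big].
\]
The two outer brackets compare the clean and noisy $\ell$-risks of a fixed function and are exactly of the form handled by Lemma \ref{lem:ub of clean-noisy}; the inner bracket compares the noisy population risks of the two estimators and is where the optimality of $\nes$ must enter.

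For the outer terms, I would invoke Lemma \ref{lem:ub of clean-noisy} to get $\calr(\nes) - \tcalr(\nes) \le \tfrac{3C\rho}{2}$, and its symmetric counterpart (which should follow from the identical argument, since the bound there is really on $|\calr(f) - \tcalr(f)|$) to get $\tcalr(f_n) - \calr(f_n) \le \tfrac{3C\rho}{2}$; together these contribute the irreducible $3C\rho$. For the inner term, I would chain through the noisy empirical risk: on the event where $\sup_{f\in\calf}|\tcalr(f) - \tcalr_n(f)| \le G_\delta(n)$ (Assumption \ref{assm:ERM} applied to $\tilde P$),
\[
\tcalr(\nes) \le \tcalr_n(\nes) + G_\delta(n) \le \tcalr_n(f_n) + G_\delta(n) \le \tcalr(f_n) + 2G_\delta(n),
\]
where the crucial middle inequality is the defining optimality $\tcalr_n(\nes) \le \tcalr_n(f_n)$ from \eqref{eqn:noisy estimator}. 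Summing the three bounds then yields \eqref{eqn:upper thm}. The high-probability statement comes from a union bound: applying Assumption \ref{assm:ERM} to the clean distribution $P$ (to control the quantities tied to $f_n$) and to the noisy distribution $\tilde P$ (to control $\nes$) gives two events of probability at least $1-\delta$ each, so the conclusion holds with probability at least $1-2\delta$.

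I expect the main difficulty to be conceptual rather than computational: because $\nes$ minimizes the \emph{noisy} empirical risk $\tcalr_n$ rather than the clean one, standard ERM consistency cannot be applied directly against $\calr$, and the optimality inequality may be used only with respect to $\tcalr_n$ — which is precisely why the detour through the noisy risk is forced. This is also what makes the noise penalty unavoidable: each crossing between the clean and noisy worlds costs $\tfrac{3C\rho}{2}$ via Lemma \ref{lem:ub of clean-noisy}, and the decomposition crosses twice, producing the $3C\rho$ term that persists as $n \to \infty$. The one point I would want to verify carefully is that the reverse direction of Lemma \ref{lem:ub of clean-noisy} holds with the same constant; if only the stated one-sided bound were available, the term $\tcalr(f_n) - \calr(f_n)$ would instead have to be routed through the clean generalization bound, which is the second reason both the clean and noisy instances of Assumption \ref{assm:ERM} are invoked and the confidence degrades to $1-2\delta$.
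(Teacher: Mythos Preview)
Your argument is correct and uses the same ingredients as the paper --- Lemma~\ref{lem:ub of clean-noisy} for the clean/noisy crossings and Assumption~\ref{assm:ERM} together with the optimality of $\nes$ for the statistical part --- but the telescoping is arranged differently. The paper pivots through the \emph{empirical} risks $\tcalr_n(\nes)$ and $\calr_n(f_n)$: it bounds $\tcalr(\nes)-\tcalr_n(\nes)$ and $\calr_n(f_n)-\calr(f_n)$ by $G_\delta(n)$ each (invoking Assumption~\ref{assm:ERM} once for $\tilde P$ and once for $P$, which is where the $1-2\delta$ comes from), and then handles the leftover $\tcalr_n(f_n)-\calr_n(f_n)$ by passing back to the population supremum and applying the reverse direction of Lemma~\ref{lem:ub of clean-noisy}. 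You instead pivot entirely through the \emph{noisy population} risk $\tcalr$, so Lemma~\ref{lem:ub of clean-noisy} is applied twice at the population level and Assumption~\ref{assm:ERM} is needed only for $\tilde P$. Your route is in fact slightly sharper: the chain $\tcalr(\nes)\le\tcalr_n(\nes)+G_\delta(n)\le\tcalr_n(f_n)+G_\delta(n)\le\tcalr(f_n)+2G_\delta(n)$ lives on a single high-probability event, so your bound already holds with probability $1-\delta$, and the clean instance of Assumption~\ref{assm:ERM} you mention in the last paragraph is never actually used in your decomposition. Both arguments rely on the reverse inequality $\sup_{f}\{\tcalr(f)-\calr(f)\}\le\tfrac{3C\rho}{2}$, which is not stated in Lemma~\ref{lem:ub of clean-noisy} but follows by the symmetric computation (and the paper's own proof uses it as well).
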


We can see that a constant term related to the constant $\rho$ appears in the upper bound in Theorem \ref{thm:upper}. Similar to the result of Lemma \ref{lem:ub of clean-noisy}, the structure of \eqref{eqn:upper thm} appears to be reasonable. If the noise level $\rho$ is small and the sample size $n$ is large enough, the performance difference between the clean estimator and the noisy estimator becomes negligible. On the other hand, if there is a significant amount of noise in the data, the performance difference between the clean estimator and the noisy estimator grows substantially. Using the fundamental relationship between the excess risk and generalization bound and Theorem \ref{thm:upper}, we can derive an upper bound for the excess risk of the noisy estimator $\nes$.
\begin{corollary}\label{cor:excess}
Consider the ILN model under Assumptions \ref{assm:constant C} and \ref{assm:ERM}. Let $(\bx_1,\tilde{y}_1)$,$\cdots,$ $(\bx_n, \tilde{y}_n)$ be an i.i.d. sample drawn from the distribution $\tilde{P}$, and $\nes$ be the noisy estimator as defined in Equation \eqref{eqn:noisy estimator}.
Then, for any $\delta \in (0,1]$, with probability at least $1-4\delta$, we have
\begin{align}
\calr(\nes) - \underset{f \in\calf}{\inf} \calr(f) \leq 3C\rho +  4G_\delta(n). \label{eqn:upper}
\end{align}
\end{corollary}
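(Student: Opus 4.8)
The plan is to reduce this excess-risk statement to the risk-gap bound already established in Theorem \ref{thm:upper} together with a routine consistency argument for the clean estimator. The starting point is the trivial decomposition obtained by inserting and subtracting $\calr(f_n)$:
\[
\calr(\nes) - \inf_{f\in\calf}\calr(f) = \big(\calr(\nes) - \calr(f_n)\big) + \big(\calr(f_n) - \inf_{f\in\calf}\calr(f)\big).
\]
The first parenthesized term is exactly the quantity controlled by Theorem \ref{thm:upper}, which bounds it by $3C\rho + 2G_\delta(n)$ on an event of probability at least $1-2\delta$. Here $f_n$ is introduced purely as an analysis device: it is the clean ERM solution on the coupled sample, and although the corollary only observes the noisy pairs $(\bx_i,\tilde y_i)$, the joint law $D$ lets us define $f_n$ on the same probability space. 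So it remains only to bound the second term, the excess risk of $f_n$ over the hypothesis class $\calf$.

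For that second term I would apply Assumption \ref{assm:ERM} to the clean distribution $P$. Fix any $\epsilon>0$ and choose $f_\epsilon\in\calf$ with $\calr(f_\epsilon)\le \inf_{f\in\calf}\calr(f)+\epsilon$; such an $f_\epsilon$ exists by definition of the infimum, which need not be attained. On the uniform-deviation event $\{\sup_{f\in\calf}|\calr(f)-\calr_n(f)|\le G_\delta(n)\}$, the optimality $\calr_n(f_n)\le \calr_n(f_\epsilon)$ of the clean ERM solution yields the chain
\[
\calr(f_n)\le \calr_n(f_n)+G_\delta(n)\le \calr_n(f_\epsilon)+G_\delta(n)\le \calr(f_\epsilon)+2G_\delta(n)\le \inf_{f\in\calf}\calr(f)+\epsilon+2G_\delta(n).
\]
Letting $\epsilon\downarrow 0$ gives $\calr(f_n)-\inf_{f\in\calf}\calr(f)\le 2G_\delta(n)$. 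The one point requiring care is the probability bookkeeping: the two one-sided gaps $\calr(f_n)-\calr_n(f_n)$ and $\calr_n(f_\epsilon)-\calr(f_\epsilon)$ are controlled through Assumption \ref{assm:ERM} applied to the clean distribution, and to keep the argument modular—rather than reasoning about which generalization events are shared with the internal proof of Theorem \ref{thm:upper}—I would account for these as a fresh $2\delta$ of failure probability.

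Finally I would combine the two pieces by a union bound. The risk-gap event of Theorem \ref{thm:upper} holds with probability at least $1-2\delta$ and the clean-ERM event with probability at least $1-2\delta$, so both hold simultaneously with probability at least $1-4\delta$; adding the two bounds gives $\calr(\nes)-\inf_{f\in\calf}\calr(f)\le 3C\rho+4G_\delta(n)$, as claimed. I do not anticipate a genuine obstacle here, since all of the noise-dependent content is inherited from Theorem \ref{thm:upper} (and, through it, Lemmas \ref{lem:eta} and \ref{lem:ub of clean-noisy}); the only subtleties are the $\epsilon$-approximation of a possibly unattained infimum and the union-bound accounting that produces the confidence level $1-4\delta$ rather than a sharper constant.
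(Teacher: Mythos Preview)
Your proposal is correct and follows essentially the same approach as the paper: decompose the excess risk by inserting $\calr(f_n)$, bound the first piece with Theorem~\ref{thm:upper}, bound the second via the standard ERM consistency argument from Assumption~\ref{assm:ERM}, and union-bound to reach $1-4\delta$. You are simply more explicit than the paper about the $\epsilon$-approximation of the infimum and the coupling that lets $f_n$ exist on the same probability space.
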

\begin{proof}
    With probability $1-2\delta$, we have
    \begin{align*}
        \calr(f_n) - \underset{f \in \calf}{\inf} \ R(f) \leq  2G_\delta(n).
    \end{align*}
    Then, adding the above inequality to inequality \eqref{eqn:upper thm}, we get the desired result.
\end{proof}

We refer to the constant term $3C\rho$ in the upper bound given by Corollary \ref{cor:excess} as an \textit{irreducible error} which is conceptually similar to the \textit{approximation error}, as it arises from approximating the clean distribution using noisy samples. Furthermore, the term $4G_\delta(n)$ can be interpreted as an \textit{estimation error}.
However, for some special cases, the irreducible error can be zero, which occurs when the regression function $\eta(\bx)$ is learnable only using noisy samples. As we mentioned in Section \ref{sec:intro}, \citet{menon2018learning} showed that it is possible to learn $\eta(\bx)$ through minimizing the empirical risk of noisy samples when the noisy label is purely instance-dependent, where the noise function is identical for all labels and depends solely on the instance $\bx \ (\rhop(\cdot)\equiv\rhom(\cdot))$.
Also, we discussed various approaches that enable learning with CCN in Section \ref{sec:intro}; however, their theoretical understanding remains limited in the context of ILN. 
Therefore, a pertinent question arises: Is there an approach, apart from ERM, that can remove the irreducible error term $3C\rho$ in Corollary \ref{cor:excess} while relying exclusively on noisy samples?
We address this question by studying the minimax risk lower bound in Section \ref{sec:lower bound}.

\subsection{Applications}\label{subsec:examples}
In this section, we present examples of two specific settings where Theorem \ref{thm:upper} can be applied. First, observe that Assumption \ref{assm:constant C} and Assumption \ref{assm:ERM} are quite broad, and this generality enables us to apply our theoretical result to a wide range of settings. Let us consider a bounded domain $\calx:= \{x \in \bbr^d | \|x \|_2\leq X_* \}$. Then, Assumption \ref{assm:constant C} is valid for any Lipschitz loss function $\ell(f(\bx),y)$ and any hypothesis class $\calf$ that is a subset of a class of $L_\calf$-Lipschitz functions, where $L_\calf \geq 0$ is a given constant.
We provide an example for the case when we consider a linear hypothesis class.
\begin{example}\label{ex:linear}
Suppose that we have a bounded domain  $\calx:= \{x \in \bbr^d ~|~ \|x \|_2\leq X_* \}$, a constrained linear hypothesis class $\calf:= \{ x \rightarrow w^\top x \ ~|~ \ \|w\|_2\leq W_* \}$ and a $L$-Lipschitz margin-based loss function,
i.e., $\ell(f(\bx),y) = \ell(y\cdot f(\bx))$. Notice that the linear hypothesis class is a subset of a class of $2X_*W_*$-Lipschitz functions, with logistic loss and hinge loss being examples of such loss functions.
Then, we have
\begin{align*}
    |\ell(f(\bx)) - \ell(-f(\bx))| \leq 2L|f(\bx)| = 2L|w^\top \bx| \leq 2LX_*W_*.
\end{align*}
So Assumption \ref{assm:constant C} holds with constant $C=2LX_*W_*$.
\end{example}
Example \ref{ex:linear} represents a well-studied, classic setting found in machine learning literature. \citet{kakade2008complexity} studied the generalization bounds for the constrained linear hypothesis class, and we review a pertinent result in the following lemma.
\begin{lemma}[Corollary 4 in \citep{kakade2008complexity}]\label{lem:linear}
    Suppose we adopt the same setting as in Example \ref{ex:linear}. For any $\delta \in (0,1]$, with probability at least $1-\delta$, we have
    \begin{align*}
        \underset{f \in \calf}{\sup} \left| \calr(f) - \calr_n(f) \right| \leq 2LX_*W_*\sqrt{\frac{1}{n}} + LX_*W_*\sqrt{\frac{\log(1/\delta)}{2n}}.
    \end{align*}
\end{lemma}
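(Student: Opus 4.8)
The plan is to prove Lemma \ref{lem:linear} as a distribution-free uniform convergence bound via the Rademacher-complexity framework, exploiting that the hypothesis class is linear and the loss is Lipschitz. Write $\mathcal{G} := \{(\bx,y) \mapsto \ell(y\cdot w^\top \bx) : \|w\|_2 \le W_*\}$ for the induced loss class and let $\mathrm{Rad}_n(\mathcal{G})$ denote its empirical Rademacher complexity. First I would apply McDiarmid's bounded-difference inequality directly to the scalar statistic $\Phi(S) := \sup_{f \in \calf} |\calr(f) - \calr_n(f)|$: replacing a single sample changes each empirical loss term, and hence $\Phi$, by a controlled amount, so $\Phi$ concentrates around its expectation with the $\sqrt{\log(1/\delta)/(2n)}$-type deviation. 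Then a standard symmetrization argument bounds $\bbe[\Phi(S)] \le 2\,\mathrm{Rad}_n(\mathcal{G})$, reducing the whole problem to estimating the Rademacher complexity of the linear class seen through the $L$-Lipschitz loss.

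The two remaining ingredients are a contraction step and an explicit complexity computation. For the contraction, note that for each fixed $(\bx_i,y_i)$ the map $t \mapsto \ell(y_i t)$ is $L$-Lipschitz, since $\ell$ is $L$-Lipschitz and $|y_i|=1$; Talagrand's contraction (Ledoux--Talagrand) inequality in its per-coordinate form then gives $\mathrm{Rad}_n(\mathcal{G}) \le L\cdot \mathrm{Rad}_n(\calf)$, where $\calf$ is the raw linear class. Equivalently, one absorbs the sign $y_i$ into the symmetric Rademacher variable $\sigma_i$ before contracting. Finally, I would compute the Rademacher complexity of the linear predictors by dualizing the $\ell_2$ constraint:
\begin{align*}
\mathrm{Rad}_n(\calf) &= \bbe_\sigma \sup_{\|w\|_2 \le W_*} \frac{1}{n} \sum_{i=1}^n \sigma_i\, w^\top \bx_i = \frac{W_*}{n}\, \bbe_\sigma \Big\| \sum_{i=1}^n \sigma_i \bx_i \Big\|_2 \\
&\le \frac{W_*}{n} \sqrt{\,\bbe_\sigma \Big\| \sum_{i=1}^n \sigma_i \bx_i \Big\|_2^2\,} = \frac{W_*}{n} \sqrt{\sum_{i=1}^n \|\bx_i\|_2^2} \le \frac{W_* X_*}{\sqrt{n}},
\end{align*}
using Cauchy--Schwarz for the dual norm, Jensen together with $\bbe_\sigma[\sigma_i\sigma_j] = \mathds{1}\{i=j\}$, and $\|\bx_i\|_2 \le X_*$. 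Combining, $\bbe[\Phi(S)] \le 2LX_*W_*/\sqrt{n}$, giving the first term of the stated bound, while McDiarmid supplies the second. Since every estimate depends on the data only through $\|\bx_i\|_2 \le X_*$, the bound holds uniformly over all distributions $P$, as required by Assumption \ref{assm:ERM}.

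The main obstacle is tracking the exact constants, specifically pinning down the coefficient $LX_*W_*$ (rather than a crude $2LX_*W_*$) on the deviation term. This requires bounding the bounded-difference constant in McDiarmid by the genuine range of the loss over the reachable interval $y\,w^\top\bx \in [-X_*W_*, X_*W_*]$, controlled through the Lipschitz constant and the radius bound $|w^\top \bx| \le X_*W_*$, rather than by twice the Lipschitz-times-radius quantity. A secondary subtlety is the correct invocation of the contraction inequality when the scalar maps $t \mapsto \ell(y_i t)$ differ across coordinates through $y_i$: this is handled by the per-coordinate contraction lemma (or the sign-absorption trick), and one must verify that multiplying by $y_i \in \{-1,+1\}$ leaves both the Lipschitz constant and the Rademacher complexity unchanged.
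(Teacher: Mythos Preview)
The paper does not supply its own proof of this lemma; it is quoted as Corollary~4 of \citet{kakade2008complexity}. Your outline---McDiarmid concentration of the uniform deviation, symmetrization to Rademacher complexity, Ledoux--Talagrand contraction through the $L$-Lipschitz margin loss, and the Cauchy--Schwarz/Jensen bound $\mathrm{Rad}_n(\calf)\le X_*W_*/\sqrt{n}$ for the $\ell_2$-constrained linear class---is exactly the argument in that reference, so the approach is correct and coincides with the cited source.
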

Lemma \ref{lem:linear} implies that $G_\delta(n) = 2LX_*W_*\left(\sqrt{\frac{1}{n}} + \sqrt{\frac{\log(1/\delta)}{2n}}\right)$ fulfills Assumption \ref{assm:ERM} in the context of Example \ref{ex:linear}. 
Thus, by applying the above result to Theorem \ref{thm:upper}, we obtain the following upper bound.
\begin{proposition}
Suppose we follow the same setting of Example \ref{ex:linear}. For any $\delta \in (0,1]$, with probability at least $1-4\delta$, we have
\begin{align}
    \calr(\nes) - \calr(f_n) \leq 2LX_*W_*\left(3\rho + 2\sqrt{\frac{1}{n}} + \sqrt{\frac{2\log(1/\delta)}{n}} \right).\label{eqn:linear}
\end{align}
\end{proposition}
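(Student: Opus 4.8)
The plan is to obtain this Proposition as a direct specialization of Theorem \ref{thm:upper} to the concrete linear setting. Theorem \ref{thm:upper} already delivers the general bound $\calr(\nes) - \calr(f_n) \leq 3C\rho + 2G_\delta(n)$ at confidence $1-2\delta$, so essentially all that remains is to supply the correct values of the constant $C$ and the generalization function $G_\delta(n)$ for the setting of Example \ref{ex:linear} and then simplify.

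First I would verify that the two hypotheses of Theorem \ref{thm:upper} are in force here. Assumption \ref{assm:constant C} is exactly what Example \ref{ex:linear} establishes via the $L$-Lipschitz margin-based argument, namely $|\ell(f(\bx))-\ell(-f(\bx))| \leq 2LX_*W_*$, so we may take $C = 2LX_*W_*$. For Assumption \ref{assm:ERM}, I would invoke Lemma \ref{lem:linear} (Corollary 4 of \citet{kakade2008complexity}), which gives the uniform deviation bound with $G_\delta(n) = 2LX_*W_*\big(\sqrt{1/n} + \sqrt{\log(1/\delta)/(2n)}\big)$. It is worth noting explicitly that this bound is distribution-free, hence it applies verbatim to the noisy distribution $\tilde{P}$ as well, which is precisely the property used in the remark following Assumption \ref{assm:ERM} and inside the proof of Theorem \ref{thm:upper}.

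With these two identifications in hand, the remaining step is purely algebraic substitution into $3C\rho + 2G_\delta(n)$. The first term gives $3C\rho = 2LX_*W_*\cdot 3\rho$. For the second term, factoring out $2LX_*W_*$ from $2G_\delta(n)$ yields the leading contribution $2LX_*W_*\cdot 2\sqrt{1/n}$, while the deviation piece simplifies using $2\sqrt{\log(1/\delta)/(2n)} = \sqrt{2\log(1/\delta)/n}$. Collecting everything under the common factor $2LX_*W_*$ reproduces exactly $2LX_*W_*\big(3\rho + 2\sqrt{1/n} + \sqrt{2\log(1/\delta)/n}\big)$, which is the claimed right-hand side of \eqref{eqn:linear}.

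I do not anticipate a genuine obstacle here; the only bookkeeping worth care is the probability. Theorem \ref{thm:upper} already produces the event at confidence $1-2\delta$ (a union bound over the clean and noisy generalization events supplied by Lemma \ref{lem:linear}). Since $1-2\delta \geq 1-4\delta$, the inequality holds a fortiori at the stated confidence $1-4\delta$, so the conclusion follows immediately; in fact one could state the sharper $1-2\delta$. The ``hard part,'' such as it is, amounts only to the routine simplification of the square-root coefficients performed above.
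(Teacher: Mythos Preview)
Your proposal is correct and matches the paper's approach exactly: the paper simply states that the proposition follows by applying Theorem \ref{thm:upper} with the constant $C = 2LX_*W_*$ from Example \ref{ex:linear} and the $G_\delta(n)$ from Lemma \ref{lem:linear}, and your substitution and simplification carry this out verbatim. Your observation that Theorem \ref{thm:upper} already gives the bound at confidence $1-2\delta$, so that the stated $1-4\delta$ is looser than necessary, is also correct.
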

\citet{lee2022binary} showed the upper bound of the excess risk of noisy estimator when $\calf$ is a linear hypothesis class without bound constraints. 
In fact, the upper bound in \eqref{eqn:linear} can be derived by following the same proof technique as \citet{lee2022binary}.
However, it is important to note that our theoretical results are considerably more general, as their results were based on the simpler RCN model, while our results were built upon the ILN model. Also, their findings were limited to the linear settings, while our results can be applied to a wide range of classes.
As an example beyond the linear hypothesis class, we consider a case of non-parametric regression.
Let us consider the following example.
\begin{example}\label{ex:kernel} Suppose we have a bounded kernel $k$ with $k(x,x) \leq R^2$ for all $x \in \calx$ and let $\calh$ be its reproducing kernel hilbert space (RKHS), and $L$-Lipschitz margin-based loss function. We consider a hypothesis class $\calf = \{f \in \calh | \|f\|_\calh \leq M \}$, where $\|\cdot\|_\calh$ is the corresponding RKHS norm.
For given $f \in \calf$ and $\bx \in\calx$, we have
\begin{align*}
    \left|\ell(f(\bx)) - \ell (-f(\bx)) \right| \leq 2|f(\bx)| \leq 2L\|f\|_\infty \leq 2LR \|f\|_\calh \leq 2LRM.
\end{align*}
So, Assumption \ref{assm:constant C} holds with constant $C=2LRM$.
\end{example}
Also, we have the following well-known result generalization bound for RKHS \cite[Chapter~6.3]{mohri2018foundations}.
\begin{lemma}\label{lem:kernel}
Suppose we adhere to the same setting as in Example \ref{ex:kernel}. For given $\delta \in (0,1]$, with probability at least $1-\delta$, we have
    \begin{align*}
        \underset{f \in \calf}{\sup} \ \left| \calr(f) - \calr_n(f) \right| \leq 2LRM\left( \sqrt{\frac{1}{n}} + \sqrt{\frac{\log(2/\delta)}{2n}} \right).
    \end{align*}
\end{lemma}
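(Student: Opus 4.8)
The plan is to instantiate the standard Rademacher-complexity route for uniform convergence, exactly as in \cite[Chapter~6.3]{mohri2018foundations}, while tracking the constants specific to the RKHS ball $\calf$. Write $\Phi(S) := \sup_{f \in \calf}\{\calr(f) - \calr_n(f)\}$ for a clean sample $S = \{(\bx_i, y_i)\}_{i=1}^n$, and let $\mathfrak{R}_n(\mathcal{G})$ denote the Rademacher complexity of a function class $\mathcal{G}$. The argument has three ingredients: a symmetrization bound on $\bbe[\Phi(S)]$, an explicit estimate of the Rademacher complexity of $\calf$, and a concentration step via McDiarmid's bounded-differences inequality. I would then repeat the argument for $-\Phi$ and combine the two via a union bound to pass from the one-sided gap to the absolute value $\sup_{f}|\calr(f) - \calr_n(f)|$.

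First I would record the pointwise bound that drives every constant: by the reproducing property and Cauchy--Schwarz, $|f(\bx)| = |\langle f, k(\cdot,\bx)\rangle_\calh| \leq \|f\|_\calh \sqrt{k(\bx,\bx)} \leq MR$ for all $f \in \calf$ and $\bx \in \calx$, which is precisely the estimate already used in Example \ref{ex:kernel}. Consequently the margin $y f(\bx)$ lies in $[-MR, MR]$, and since $\ell$ is $L$-Lipschitz the composed loss $\ell(yf(\bx))$ has oscillation at most $2LRM$ over $\calf$. This controls the bounded-differences constant: swapping a single sample changes $\calr_n$ (and hence $\Phi$) by at most $2LRM/n$.

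Next, standard symmetrization gives $\bbe[\Phi(S)] \leq 2\,\mathfrak{R}_n(\ell \circ \calf)$, and the contraction (Talagrand) inequality, applied with the $L$-Lipschitz maps $t \mapsto \ell(y_i t)$, yields $\mathfrak{R}_n(\ell \circ \calf) \leq L\,\mathfrak{R}_n(\calf)$. The Rademacher complexity of the RKHS ball is then bounded by the classical computation
\begin{align*}
    \mathfrak{R}_n(\calf) = \frac{M}{n}\,\bbe_\sigma\Big\| \sum_{i=1}^n \sigma_i\, k(\cdot,\bx_i) \Big\|_\calh \leq \frac{M}{n}\sqrt{\bbe_\sigma \sum_{i=1}^n k(\bx_i,\bx_i)} \leq \frac{M}{n}\sqrt{nR^2} = \frac{MR}{\sqrt{n}},
\end{align*}
where the first inequality is Jensen's together with the vanishing of the Rademacher cross-terms, and the second uses $k(\bx,\bx)\leq R^2$; note this estimate is sample-independent, so it holds for the expected complexity as well. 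Chaining these bounds gives $\bbe[\Phi(S)] \leq 2LRM/\sqrt{n}$.

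Finally, applying McDiarmid's inequality with bounded-differences constant $2LRM/n$ shows that, with probability at least $1-\delta/2$,
\begin{align*}
    \Phi(S) \leq \bbe[\Phi(S)] + 2LRM\sqrt{\frac{\log(2/\delta)}{2n}} \leq 2LRM\left(\sqrt{\frac{1}{n}} + \sqrt{\frac{\log(2/\delta)}{2n}}\right).
\end{align*}
The identical argument applied to $\sup_{f}\{\calr_n(f) - \calr(f)\}$ holds with probability at least $1-\delta/2$, and a union bound over the two events yields the two-sided statement with probability at least $1-\delta$. I expect the only delicate point to be bookkeeping: obtaining the factor of $2$ inside the logarithm from the union bound, and confirming that the loss \emph{oscillation} $2LRM$ (rather than a bound on $\ell$ itself, which a margin-based loss need not satisfy) is the correct bounded-differences constant. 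Everything else is a direct specialization of the cited result.
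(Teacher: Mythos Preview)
Your proposal is correct and follows exactly the standard Rademacher-complexity argument from \cite[Chapter~6.3]{mohri2018foundations} that the paper cites in lieu of its own proof. The paper does not supply an independent proof of this lemma, so your derivation---symmetrization, Talagrand contraction with the $L$-Lipschitz maps $t\mapsto\ell(y_it)$, the RKHS-ball Rademacher bound $MR/\sqrt{n}$, and McDiarmid with bounded-differences constant $2LRM/n$ followed by a union bound---is precisely the intended route, with the constants tracked correctly.
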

Lemma \ref{lem:kernel} shows us that we can use $G_\delta(n) = 2LRM\left( \sqrt{\frac{1}{n}} + \sqrt{\frac{\log(2/\delta)}{2n}} \right)$. Thus, by applying the above result to Theorem \ref{thm:upper}, we get the following result.
\begin{proposition}
Suppose we follow the setting same as Example \ref{ex:kernel}. For any $\delta \in (0,1]$, with probability at least $1-4\delta$, we have 
\begin{align*}
    \calr(\nes) - \calr(f_n) \leq 2LRM\left(3\rho + 2\sqrt{\frac{1}{n}}  + \sqrt{\frac{\log(2/\delta)}{2n}} \right).
\end{align*}
\end{proposition}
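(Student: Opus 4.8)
The plan is to treat this proposition as a direct instantiation of Theorem~\ref{thm:upper} in the RKHS setting of Example~\ref{ex:kernel}, so that the real work reduces to checking that the two governing assumptions hold with the right constants and then simplifying. First I would record that Example~\ref{ex:kernel} already verifies Assumption~\ref{assm:constant C} with $C = 2LRM$: the reproducing property together with $k(\bx,\bx)\le R^2$ gives $|f(\bx)| \le \|f\|_\calh \sqrt{k(\bx,\bx)} \le MR$, and the $L$-Lipschitzness of the margin-based loss turns this into $|\ell(f(\bx)) - \ell(-f(\bx))| \le 2LRM$ uniformly over $f\in\calf$ and $\bx\in\calx$.

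Second, I would invoke Lemma~\ref{lem:kernel}, the standard Rademacher-type generalization bound for the ball $\{f\in\calh : \|f\|_\calh \le M\}$, which shows that Assumption~\ref{assm:ERM} is met with $G_\delta(n) = 2LRM(\sqrt{1/n} + \sqrt{\log(2/\delta)/(2n)})$. As noted immediately after Assumption~\ref{assm:ERM}, the same $G_\delta$ simultaneously controls the uniform deviation under the noisy distribution $\tilde{P}$, which is exactly what Theorem~\ref{thm:upper} requires.

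With both assumptions in force, Theorem~\ref{thm:upper} applies verbatim: with probability at least $1-2\delta$ we have $\calr(\nes)-\calr(f_n) \le 3C\rho + 2G_\delta(n)$. The remaining step is purely algebraic—substitute $C = 2LRM$ and the expression for $G_\delta(n)$, factor out the common $2LRM$, and collect the $\rho$, $\sqrt{1/n}$, and $\log(2/\delta)$ contributions to reach the displayed bound. For the probability statement, I would observe that since the event of Theorem~\ref{thm:upper} already holds with probability at least $1-2\delta$, and $1-2\delta \ge 1-4\delta$, the weaker $1-4\delta$ guarantee claimed here follows a fortiori.

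I do not anticipate a genuine obstacle here: all of the mathematical content is carried by Theorem~\ref{thm:upper} (which internally bundles the two one-sided generalization guarantees together with Lemma~\ref{lem:ub of clean-noisy}) and by the off-the-shelf RKHS complexity estimate of Lemma~\ref{lem:kernel}. The only points demanding care are the bookkeeping of the constant $2LRM$ and of the factor of $2$ multiplying $G_\delta(n)$ when the radicals are combined, and confirming that the union bound producing the $1-2\delta$ event is already absorbed inside Theorem~\ref{thm:upper} rather than being re-incurred at this stage.
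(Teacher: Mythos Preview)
Your proposal is correct and matches the paper's approach exactly: the paper simply remarks that the result follows by feeding the constants from Example~\ref{ex:kernel} (giving $C=2LRM$) and Lemma~\ref{lem:kernel} (giving $G_\delta(n)$) into Theorem~\ref{thm:upper}, which is precisely your plan, and your observation that the $1-2\delta$ guarantee from Theorem~\ref{thm:upper} already implies the stated $1-4\delta$ is apt. The only thing to watch when you carry out the substitution is that $2G_\delta(n)$ produces a coefficient $2$ on the $\sqrt{\log(2/\delta)/(2n)}$ term (equivalently $\sqrt{2\log(2/\delta)/n}$, as in the linear case of Proposition~1), so the single factor printed in the display appears to be a harmless typo rather than something your argument needs to recover.
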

This example shows that our theoretical findings can be applied to a wide range of settings encountered in classification problems.

\section{Lower bound analysis}\label{sec:lower bound}

In this section, we study the minimax lower bound for 0-1 loss for binary classification within the ILN model, to see whether the irreducible error in Equation \eqref{eqn:upper} can be removed.
We would like to show that learning with ILN is not possible under assumptions typically made in the CCN model.
These assumptions strengthen our lower bound results, as they make the clean distribution $P$ more favorable for learning.
We start by introducing two widely adopted assumptions in the context of binary classification with CCN. These two assumptions only apply to the clean distribution $P$. 
\begin{assumption}[Margin assumption \citep{mammen1999smooth} ]\label{assm:margin}
    For given $\alpha \in [0,\infty)$ and $C_\alpha \in [1,\infty)$, the clean distribution $P$ satisfies the margin assumption with parameter $(\alpha,C_\alpha)$ if the following holds for all $\xi \in (0,1)$,
    \begin{align*}
        \bbp\left( \left\{ \bx \in \calx: 0<\left|\eta(\bx)- \frac{1}{2} \right| < \xi \right\}\right) \leq C_\alpha \xi^\alpha.
    \end{align*}
\end{assumption}
The margin assumption is widely used in the problem of binary classification \citep{audibert2007fast,mammen1999smooth,pillaud2018exponential,reeve2019classification}. In binary classification, the easiest instance $\bx$ is where the label is deterministic $(\eta(\bx)=0 \text{ or }1)$, while the hardest instance $\bx$ is where the conditional probability is on the decision boundary ($\eta(\bx)=\frac{1}{2}$). Thus, the difficulty of the problem is determined by the quantity of $\bx \in \calx$ where $\eta(\bx)$ is close to $\frac{1}{2}$. The margin assumption implies that we can control this quantity with the parameter $(\alpha, C_\alpha)$ and also indicates that not all of the mass of $\bx \in \calx$ is concentrated around $\eta(\bx) = \frac{1}{2}$.
\begin{assumption}[Anchor point assumption]\label{assm:anchor}
    For each label $y \in \{-1,+1\}$, there exist $\bx \in \calx$ such that $\bbp(Y = y|\bX = \bx)=1$ and $\bbp(\bX=\bx) > 0$.
\end{assumption}
The anchor point assumption implies that for each label there exists at least one instance where the label is deterministic. The anchor point allows us to estimate the unknown noise rate in the classification problem with CCN \citep{scott2015rate,reeve2019classification}. With two assumptions together, many theoretical works showed that they were able to learn the optimal classifier only with noisy samples when the noise is instance-independent but label-dependent. However, the following theorem implies that this is not possible with instance- and label-dependent label noise. 
\par 

\begin{theorem}\label{thm:lower}
Suppose $\hat{f}_n$ is a function learned using $n$ noisy samples $\tilde{S}_n$. For all $n \in \mathbb{N}$, we have
\begin{align}
    \underset{\hat{f}_n}{\inf} \  \underset{P, \tilde{P}}{\sup} \ \left\{ \mathbb{E}_{\tilde{S}_n \sim \tilde{P}_n}\left[ \calr^{0/1}(\hat{f}_n) - \calr^{0/1}(f^*) \right] \right\} \geq \frac{\rho}{16}. \label{eqn:lower}
\end{align}
The infimum is taken over all possible estimator $\hat{f}_n$ and the supremum is taken over all possible clean distribution $P$ that satisfy Assumption \ref{assm:margin} and \ref{assm:anchor}, as well as noisy distribution $\tilde{P}$, where $P$ and $\tilde{P}$ has the same marginal distribution over $\bX$.
\end{theorem}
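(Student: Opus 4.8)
The plan is to prove this minimax lower bound via \emph{Le Cam's two-point method}, exploiting the defining feature of the ILN model: two very different clean distributions can be mapped to the \emph{same} noisy distribution, rendering them statistically indistinguishable from noisy samples. Concretely, I would construct a pair of clean distributions $P_0$ and $P_1$ (with associated noise functions) that (i) share a common marginal $P_\bX$, (ii) both satisfy Assumptions \ref{assm:margin} and \ref{assm:anchor}, (iii) induce \emph{identical} noisy distributions $\tilde P_0=\tilde P_1$, yet (iv) have Bayes-optimal classifiers that disagree on a region of non-negligible mass. Because $\tilde P_0=\tilde P_1$, the law of $\tilde S_n$ is exactly the same under both hypotheses, so the product measures coincide, $\tilde P_{0,n}=\tilde P_{1,n}$, and their total variation distance is zero; no estimator $\hat f_n$ can tell the two worlds apart.

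For the construction I would work on a small discrete feature space. I reserve two deterministic atoms $x_+$ and $x_-$ with $\eta(x_\pm)\in\{1,0\}$ and strictly positive mass, identical across $P_0$ and $P_1$; these serve as the $+1$ and $-1$ anchor points and immediately verify Assumption \ref{assm:anchor}. On a third atom $x_0$ of mass $\pi_0$ I set $\eta_0(x_0)=\tfrac12+a$ and $\eta_1(x_0)=\tfrac12-a$, so the Bayes rule predicts $+1$ under $P_0$ and $-1$ under $P_1$. Using Lemma \ref{lem:eta} I then solve for noise functions forcing $\tilde\eta_0(x_0)=\tilde\eta_1(x_0)$; the natural symmetric choice flips $+1\!\to\!-1$ under $P_0$ and $-1\!\to\!+1$ under $P_1$ at a common rate, sending both noisy conditionals to $\tfrac12$ so the noisy label is uninformative at $x_0$. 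This is exactly where Assumption \ref{assm:noisy} bites: the budget $\rho_+(\cdot)+\rho_-(\cdot)\le\rho$ caps how far apart the clean conditionals can sit while still equalizing after noise, yielding the relation $a=\Theta(\rho)$. The margin assumption is then easy to check, since $x_0$ lies at distance $a$ from $\tfrac12$ and all remaining mass is on deterministic atoms, so $\bbp(0<|\eta-\tfrac12|<\xi)$ vanishes for $\xi\le a$ and equals $\pi_0$ otherwise, which can be accommodated by the parameters $(\alpha,C_\alpha)$.

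To convert indistinguishability into a risk bound, I would use the standard reduction, writing $\mathrm{exc}_j$ for the excess $0/1$-risk under $P_j$:
\begin{align*}
\inf_{\hat f_n}\sup_{P,\tilde P}\mathbb{E}\big[\calr^{0/1}(\hat f_n)-\calr^{0/1}(f^*)\big] &\ge \inf_{\hat f_n}\tfrac12\big(\mathbb{E}_{\tilde P_{0,n}}[\mathrm{exc}_0(\hat f_n)]+\mathbb{E}_{\tilde P_{1,n}}[\mathrm{exc}_1(\hat f_n)]\big)\\
&= \inf_{\hat f_n}\tfrac12\,\mathbb{E}_{\tilde S_n}\big[\mathrm{exc}_0(\hat f_n)+\mathrm{exc}_1(\hat f_n)\big] \ge \tfrac12\inf_{f}\big\{\mathrm{exc}_0(f)+\mathrm{exc}_1(f)\big\},
\end{align*}
where the first inequality bounds the supremum below by the two-point average, and the middle equality uses $\tilde P_{0,n}=\tilde P_{1,n}$ so that the two expectations are over one common law of $\tilde S_n$. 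Since the pointwise excess $0/1$-risk equals $|2\eta-\tfrac12|\cdot 2$ weighted by the indicator of disagreeing with Bayes, any fixed classifier is wrong at $x_0$ under at least one of $P_0,P_1$, contributing $2a\pi_0$ to the sum of excess risks regardless of its prediction there; hence the final infimum is at least $2a\pi_0$. Plugging in $a=\Theta(\rho)$ and a constant $\pi_0$ produces the claimed $\rho/16$.

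The main obstacle is the \emph{construction} itself, specifically the tension between requirements (iii) and (iv): equalizing the two noisy distributions confines the clean conditionals to an $O(\rho)$ band around $\tfrac12$, which simultaneously limits the achievable excess-risk gap to $O(\rho)$ and constrains where the confusion mass may be placed without violating the margin profile. Getting the bookkeeping right — taking $a$ as large as the noise budget $\rho$ allows, keeping $\eta\in[0,1]$ valid across the whole regime $\rho\in[0,1)$, allotting positive mass to the anchors, and confirming that $\tilde P_0=\tilde P_1$ holds \emph{exactly} (not merely approximately) so the two-point argument forfeits no constant — is the delicate part. It is precisely this interplay between the noise budget and the attainable separation of Bayes rules that forces the irreducible error to scale linearly with the noise level $\rho$.
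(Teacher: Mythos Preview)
Your proposal is correct and follows essentially the same route as the paper: the identical three-atom construction (two deterministic anchors plus a ``confusion'' atom $x_0$) with noise functions chosen so the two induced noisy distributions coincide \emph{exactly}, then a two-hypothesis averaging argument. The only difference is cosmetic: the paper phrases the reduction via Birg\'e's variant of Fano's method (Lemma~\ref{lem:birg}) to bound the testing error, whereas you apply Le Cam's two-point method directly; since $\tilde P_{0,n}=\tilde P_{1,n}$ makes the KL (and TV) distance zero, both tools trivialize, and your direct argument that any fixed $f$ contributes $2a\pi_0$ to $\mathrm{exc}_0(f)+\mathrm{exc}_1(f)$ is actually the cleaner of the two, avoiding the $1/3$ factor the paper picks up from Birg\'e's inequality. (Minor slip: the pointwise excess $0/1$ risk is $|2\eta-1|=2|\eta-\tfrac12|$, not $2|2\eta-\tfrac12|$; and for the explicit constant $\rho/16$ you will need to fix $a$ and $\pi_0$ concretely---the paper takes $\eta^+(x_0)=1/(2-\rho)$, $\eta^-(x_0)=(1-\rho)/(2-\rho)$ so that $|2\eta-1|\ge\rho/2$ and $\pi_0=3/4$.)
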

Theorem \ref{thm:lower} suggests that even when provided with a sufficiently large number of noisy samples, there is no estimator capable of learning the optimal function $f^*$. 
This occurs because the noise rate varies for each instance, making the presence of anchor points insufficient to accurately estimate these noise rates.
Interestingly, when combined with Theorem \ref{thm:upper}, Theorem \ref{thm:lower} suggests that empirical risk minimization achieves the optimal rate of excess risk, which is proportional to the noise level $\rho$. 
Also, this implies that no other sophisticated methods such as loss correction or label correction are necessary without any additional assumptions.

\subsection{Proof of Theorem \ref{thm:lower}}
The two representative techniques to lower bound the minimax risk are Le Cam's method and Fano's method \citep{wainwright2019high}. Both approaches reduce the estimation problem to the testing problem and subsequently derive a lower bound for the probability of error within that testing context. Here, we use a variant of Fano's method \citep{birg2001new} to derive a minimax lower bound.
\begin{lemma}[\citet{birg2001new}]\label{lem:birg}
Let $\calp$ be a finite set of probability distributions in a measurable space $(\calz,\cala)$ with $|\calp| = m$. Suppose $Z$ is a random variable that follows a given distribution $P \in \calp$ and $D(P \|Q)$ is the Kullback-Leibler divergence between two distributions $P$ and $Q$. Then, we have
\begin{align*}
    \underset{\hat{\phi}}{\inf} \ \underset{P \in \calp}{\sup}  \left\{ \bbe_{Z \sim P} \left[ \mathds{1}\left\{\hat{\phi}(Z) \neq P \right\} \right] \right\} \geq \min \left\{ 0.36, 1- \underset{P \in\calp}{\inf} \left\{ \sum_{Q \in \calp} \frac{D(P\|Q))}{m\log (m)} \right\} \right\},
\end{align*}
where the infimum is taken over all possible estimators $\hat{\phi}: \calz \rightarrow \calp$.
\end{lemma}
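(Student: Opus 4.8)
The plan is to prove this as an information-theoretic lower bound for the $m$-ary hypothesis testing problem, with Fano's inequality as the engine; the $\log m$ normalization and the clean constant $0.36$ should both emerge from the entropy of a uniform prior together with a sharpened scalar analysis. First I would pass from the worst case to an average. Place the uniform prior $V \sim \mathrm{Unif}(\calp)$ on the index of the true distribution and draw $Z \mid \{V = P\} \sim P$. Then for every estimator $\hat{\phi}$,
\[
\sup_{P \in \calp} \bbe_{Z \sim P}\!\left[ \mathds{1}\{\hat{\phi}(Z) \neq P\} \right] \geq p_e := \bbp(\hat{\phi}(Z) \neq V) = \frac{1}{m}\sum_{P \in \calp} \bbe_{Z \sim P}\!\left[ \mathds{1}\{\hat{\phi}(Z)\neq P\} \right],
\]
since a maximum dominates an average. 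Because the bound we will derive for $p_e$ depends only on $\calp$ (and not on $\hat{\phi}$), it transfers to the infimum over estimators, so it suffices to lower bound $p_e$ for an arbitrary fixed $\hat{\phi}$.

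The core step relates $p_e$ to the divergences. Writing $\hat{V} = \hat{\phi}(Z)$, the data-processing inequality gives $I(V; \hat{V}) \leq I(V; Z)$, while Fano's inequality bounds the conditional entropy by $H(V \mid \hat{V}) \leq h(p_e) + p_e \log(m-1)$, where $h(\cdot)$ is the binary entropy. Combining these with $H(V \mid \hat{V}) = \log m - I(V; \hat{V}) \geq \log m - I(V; Z)$ yields the scalar relation $\log m - I(V; Z) \leq h(p_e) + p_e \log(m-1)$, which, solved for $p_e$, produces a bound of the shape $1 - (\text{divergence term})/\log m$. To put the divergence term into the stated form I would use convexity of relative entropy in its second argument together with the fact that the mixture $\bar{P} := \frac{1}{m}\sum_{Q \in \calp} Q$ is the Kullback--Leibler centroid (the minimizer of $Q \mapsto \sum_{P} D(P\|Q)$, a cross-entropy). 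This gives $I(V; Z) = \frac{1}{m}\sum_{P \in \calp} D(P\|\bar{P}) \leq \frac{1}{m}\sum_{P \in \calp} D(P\|Q)$ for every reference $Q$, so that anchoring at a single, optimally chosen reference distribution collapses the average mutual information into a quantity of the form $\frac{1}{m\log m}\sum_{Q} D(\,\cdot\,\|Q)$. Matching the precise placement of the arguments in $\inf_{P}\sum_{Q} D(P\|Q)$ is then a matter of careful bookkeeping of the (asymmetric) direction of the divergence, which I expect to require an anchored, full-partition data-processing argument rather than the mutual-information route alone.

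The main obstacle is sharpening the scalar relation to the exact cutoff $\min\{0.36,\, 1 - \cdots\}$ with its universal constant. A direct application of Fano's inequality, crudely bounding $h(p_e) \leq \log 2$ and $\log(m-1) \leq \log m$, only yields $p_e \geq 1 - (I(V;Z) + \log 2)/\log m$, which carries an extraneous additive $\log 2$ and is loose for small $m$. Birgé's refinement instead analyzes the transcendental inequality $\log m - I(V;Z) \leq h(p_e) + p_e \log(m-1)$ directly and uniformly over all $m \geq 2$: when $p_e$ is bounded away from $0$ the entropy term can be absorbed into a constant, and optimizing the resulting one-dimensional inequality in $m$ determines the threshold $0.36$ in one regime while leaving the clean bound $1 - \inf_{P}\sum_{Q} D(P\|Q)/(m\log m)$ in the other. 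I expect this uniform scalar optimization, not any probabilistic step, to be the delicate part, since it is precisely what replaces the loose constants of textbook Fano by the sharp value stated in the lemma.
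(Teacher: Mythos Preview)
The paper does not prove this lemma at all: it is stated with attribution to \citet{birg2001new} and used as a black box in the proof of Lemma~\ref{lem:lower bound}. So there is no ``paper's own proof'' to compare against, and your write-up is effectively a sketch of Birg\'e's argument rather than of anything the authors do.

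On the sketch itself: the reduction from $\sup_P$ to the average under a uniform prior and the Fano/data-processing chain are the right backbone, but two points deserve flagging. First, the mutual-information route yields $I(V;Z)=\inf_{Q}\frac{1}{m}\sum_{P}D(P\|Q)$, whereas the statement carries $\inf_{P}\frac{1}{m}\sum_{Q}D(P\|Q)$; these are genuinely different quantities for asymmetric $D(\cdot\|\cdot)$, and swapping them is not ``bookkeeping.'' Birg\'e's original argument does not pass through $I(V;Z)$ at all: it anchors at a single reference measure $P_0\in\calp$, partitions $\calz$ by the decision regions of $\hat{\phi}$, and applies a data-processing/likelihood-ratio bound pairwise in the direction $D(P_0\|Q)$, which is exactly what produces the $\inf_P\sum_Q$ orientation. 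Second, the constant $0.36$ is not obtained by massaging $h(p_e)+p_e\log(m-1)$ uniformly in $m$ as you suggest; it comes from solving a specific one-dimensional inequality in Birg\'e's pairwise bound (and equals $1-e^{-1}$ up to rounding, not a free optimization parameter). Your plan would recover a Fano-type bound with a $\log 2$ slack and the ``wrong'' KL orientation, but not the exact statement; to get the lemma as written you would need to follow Birg\'e's anchored argument rather than the uniform-prior Fano route.
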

\par

 Throughout this section, we use $+$ and $-$ to simplify the labels $+1$ and $-1$, respectively. The proof of Theorem \ref{thm:lower} is adapted to some extent from the proof by \citet{reeve2019classification}, which also uses Fano's method to derive a minimax lower bound for the CCN model. A key difference is that our study of the more general ILN model allows us to construct two distinct clean distributions, $P^{-}$ and $P^{+}$, while ensuring that their noisy counterparts, $\tilde{P}^{-}$ and $\tilde{P}^{+}$, are indistinguishable.
In this setting, distinguishing $P^{-}$ from $P^{+}$ becomes possible with a large enough sample size, while differentiating $\tilde{P}^{-}$ from $\tilde{P}^{+}$ remains impossible. 
As a result, any estimator that relies on noisy samples to determine the originating distribution of the samples would be left with no option but to make a random guess.
\par 
We choose $\calx:= \{a,b,c\}$ to be a finite set, and the marginal distribution over $\bX$ to be the same for $P^{-}$ and $P^{+}$. Consequently, we only need to concentrate on $\eta^i(\bx)$, which represents the conditional distribution of $Y=1$ given $\bX$ for the distribution $P^i$. We denote the noisy counterpart of $\eta^i(\bx)$ as $\tilde{\eta}^i(\bx)$ for $i \in \{-,+\}$. We set $\eta^i(a) = 1$ and $\eta^i(c) = 0$ for all $i \in \{-,+\}$, ensuring that both distributions adhere to the anchor point assumption. We set $\eta^{-}(b) < \frac{1}{2}$ and $\eta^{+}(b) > \frac{1}{2}$, as we want $\bx = b$ to be the point that distinguishes $P^{-}$ and $P^{+}$. This is shown in Figure \ref{fig:clean dist}.
\begin{figure}[t]
         \centering
     \begin{subfigure}{0.45\textwidth}
         \includegraphics[width=\textwidth]{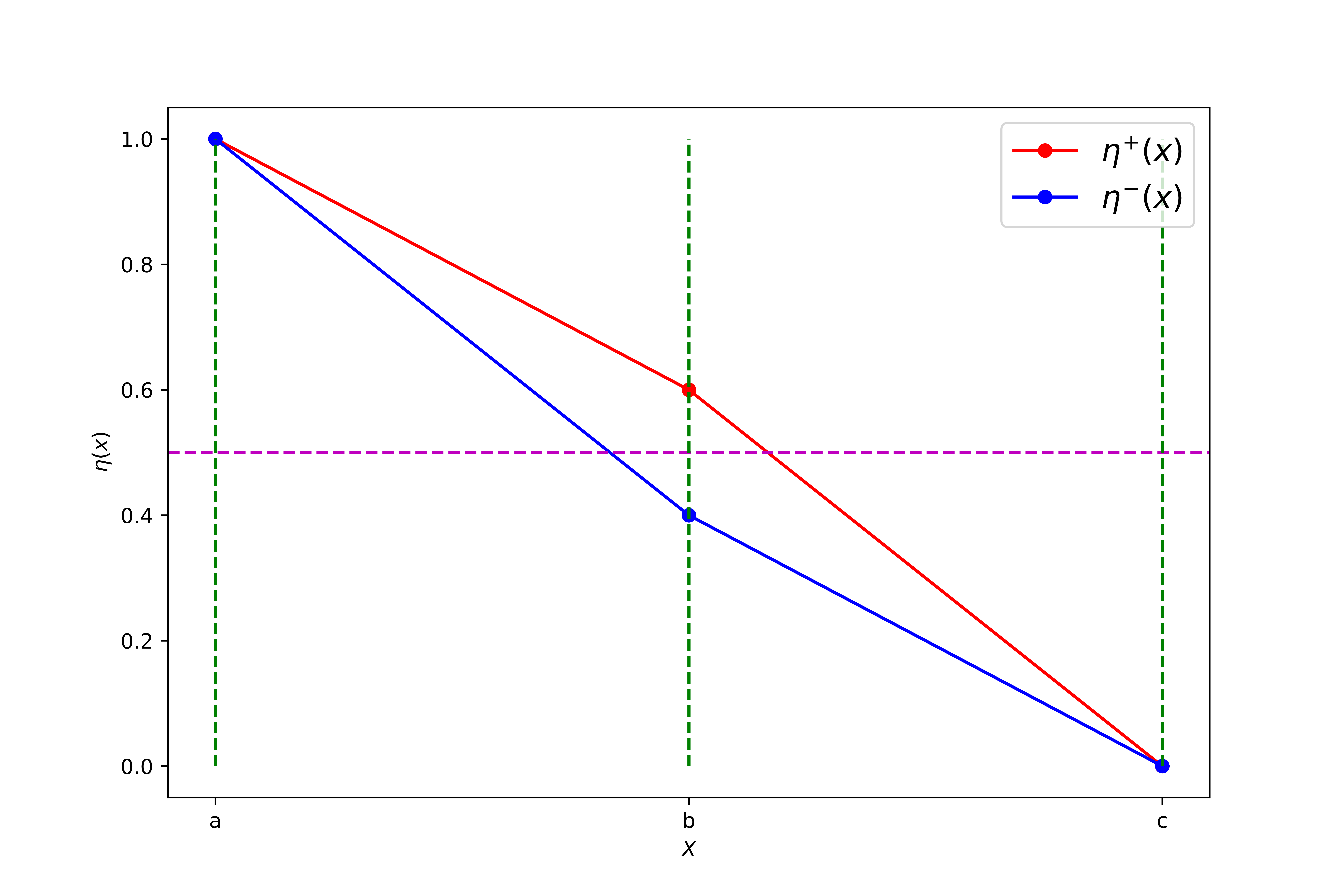}
         \caption{$P^-, P^+$}
         \label{fig:clean dist}
     \end{subfigure}
     \hfill
     \begin{subfigure}{0.45\textwidth}
         \includegraphics[width=\textwidth]{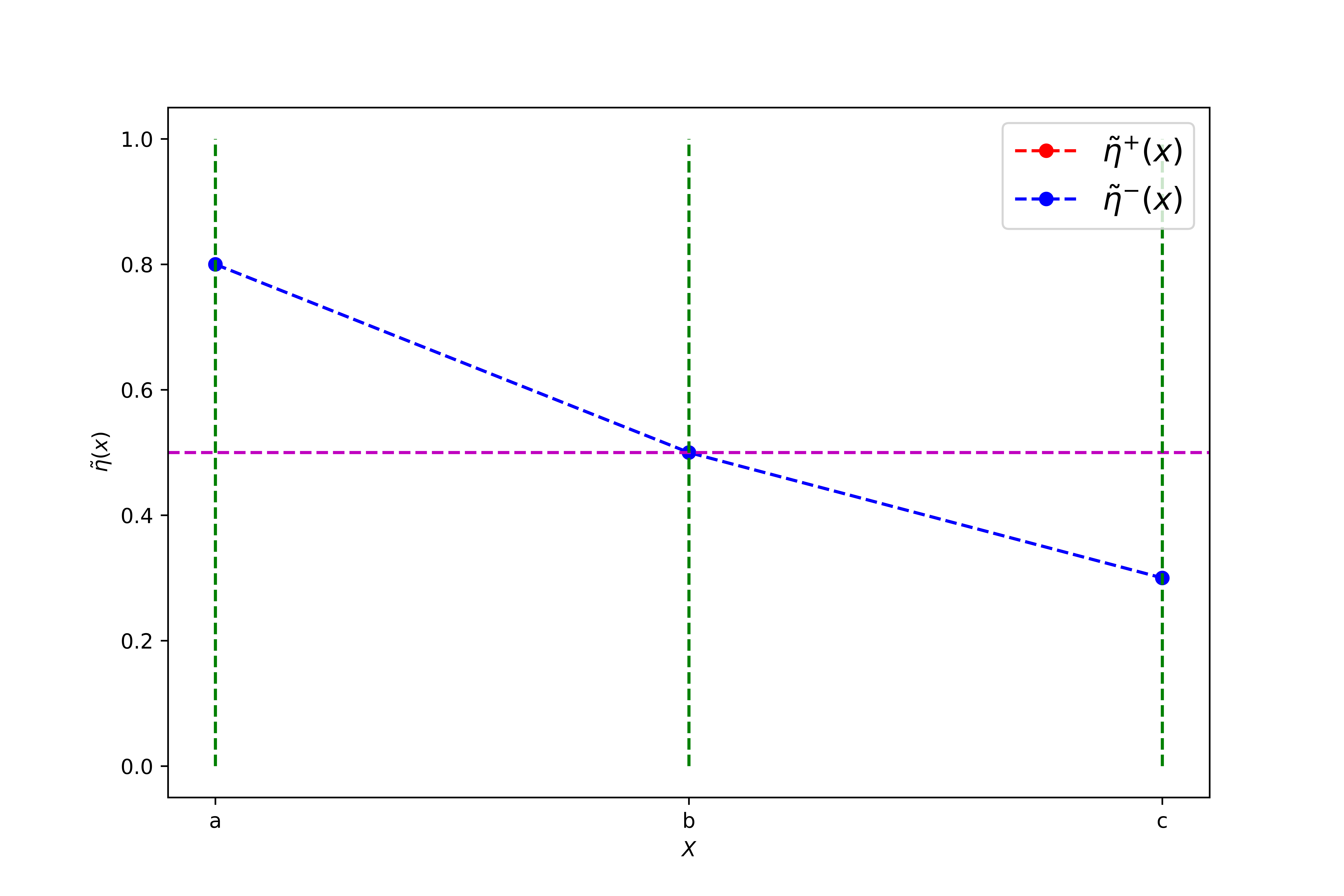}
         \caption{$\tilde{P}^-, \tilde{P}^+$}
         \label{fig:noisy dist}
     \end{subfigure}
    \caption{An illustration of the conditional probability of $P^-$ and $P^+$ and their noisy counterpart.}
    \label{fig:reeve fig2}
\end{figure}
We use $\rho^{i}_y(\bx)$ for $y \in \{-,+\}$ to denote the noise function of $\tilde{P}^i$ for $i \in \{-,+\}$.
We aim to configure the noise functions in such a way that the conditional distributions $\tilde{\eta}^{-}(\bx)$ and $\tilde{\eta}^{+}(\bx)$ become indistinguishable. By Lemma \ref{lem:eta}, we have
\begin{align}
    \tilde{\eta}^i(\bx) = (1-\rhop^i(\bx) - \rhom^i(\bx))\eta^i(\bx) + \rhom^i(\bx), \quad \forall i \in \{-,+\}. \label{eqn:eta}
\end{align}
Since $\eta^{+}(a) = \eta^{-}(a) =1$ and $\eta^{+}(c) = \eta^{-}(c) =0$, we choose $\rho^+_y(\bx) = \rho^-_y(\bx)$ for all $\bx = a,c$ and $y \in \{+1,-1\}$ resulting in $\tilde{\eta}^+(\bx) = \tilde{\eta}^-(\bx)$ for all $\bx = a,c$. 
We still need to determine the noise functions for $\bx = b$. To do so, we select $\eta^+(b) = \frac{1}{2- \rho}$ and $\eta^-(b) = \frac{1-\rho}{2-\rho}$.
For the noise function, we set $\rho^+_-(b) = 0$, $\rho^+_+(b) = \frac{\rho}{2}$, $\rho^-_-(b)= \frac{\rho}{2}$ and $\rho^-_+(b)= 0$. Then, we have $\tilde{\eta}^i(b) = \frac{1}{2}$ for all $i \in \{-,+\}$, which is illustrated in Figure \ref{fig:noisy dist}. Consequently, we have
\begin{align}
    |2\eta^i(b) -1| = \frac{\rho}{2-\rho} \geq \frac{\rho}{2}, \quad \forall i \in \{-,+\}. \label{eqn:example lower bound}
\end{align}
Equation \eqref{eqn:example lower bound} implies that the distribution $P^+$ and $P^-$ satisfy the margin assumption for $\rho >0$. 
We introduce the key lemma for the proof of our theorem. 
\begin{lemma}\label{lem:lower bound}
Let $\tilde{P}_n$ ($n$ product distribution of $\tilde{P}$) measurable classifier $\hat{f}_n$ be given. Then, we have
\begin{align*}
    \sum_{i\in \{-,+\}} \left( \bbe_{\tilde{S}_n \sim \tilde{P}^i_n}\left[ \calr^{0/1}(\hat{f}_n) \right] - \calr^{0/1}\left(f^{*,i}\right) \right) \geq \frac{\rho}{8},
\end{align*}
where $f^{*,i}:= \eta^i(\bx)-\frac{1}{2}$ for $i \in \{-,+\}$.
\end{lemma}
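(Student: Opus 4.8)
The plan is to bound the excess $0/1$-risk pointwise on the finite space $\calx=\{a,b,c\}$ and to exploit the fact that, by construction, the two noisy distributions coincide. First I would record the standard pointwise decomposition of the excess risk: for any measurable $f$ and any $i\in\{-,+\}$,
\begin{align*}
    \calr^{0/1}(f)-\calr^{0/1}(f^{*,i})=\sum_{\bx\in\calx}|2\eta^i(\bx)-1|\,\mathds{1}\{\sgn(f(\bx))\neq\sgn(2\eta^i(\bx)-1)\}\,\bbp(\bX=\bx),
\end{align*}
which follows from the elementary fact that, conditional on $\bX=\bx$, disagreeing with the Bayes rule $\sgn(f^{*,i}(\bx))=\sgn(2\eta^i(\bx)-1)$ costs exactly $|2\eta^i(\bx)-1|$. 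Since every summand is nonnegative, I can discard the contributions at $a$ and $c$ and retain only the term at $b$, reducing the problem to the behavior of $\hat{f}_n$ at the single point $b$.

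The heart of the argument is this point $b$. By construction the two Bayes rules demand opposite labels there, $\sgn(2\eta^{+}(b)-1)=+1$ and $\sgn(2\eta^{-}(b)-1)=-1$, while $|2\eta^i(b)-1|=\tfrac{\rho}{2-\rho}\ge\tfrac{\rho}{2}$ for both $i$ by \eqref{eqn:example lower bound}. The crucial observation is that, because the construction forces $\tilde{\eta}^{+}(\bx)=\tilde{\eta}^{-}(\bx)$ at every $\bx$ and the two models share the same marginal over $\bX$, the noisy distributions are identical, $\tilde{P}^{+}=\tilde{P}^{-}$, and hence $\tilde{P}^{+}_n=\tilde{P}^{-}_n$. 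Consequently the law of $\hat{f}_n=\hat{f}_n(\tilde{S}_n)$ — in particular the law of $\sgn(\hat{f}_n(b))$ — is the same regardless of which clean distribution generated the data. Writing $p:=\bbp[\hat{f}_n(b)\ge 0]$ and $q:=\bbp[\hat{f}_n(b)\le 0]$ under this common law (so $p+q\ge 1$, with overlap only on the tie event $\{\hat{f}_n(b)=0\}$), I would then lower bound the $b$-contribution under each model: under $P^{+}$ an error occurs whenever $\hat{f}_n(b)\le 0$, giving at least $\tfrac{\rho}{2}\,\bbp(\bX=b)\,q$ in expectation, and symmetrically under $P^{-}$ the contribution is at least $\tfrac{\rho}{2}\,\bbp(\bX=b)\,p$.

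Summing the two inequalities and using $p+q\ge 1$ yields
\begin{align*}
    \sum_{i\in\{-,+\}}\left(\bbe_{\tilde{S}_n\sim\tilde{P}^i_n}\left[\calr^{0/1}(\hat{f}_n)\right]-\calr^{0/1}(f^{*,i})\right)\ge\frac{\rho}{2}\,\bbp(\bX=b),
\end{align*}
and the claimed bound $\ge\tfrac{\rho}{8}$ follows from the choice of marginal, under which $\bbp(\bX=b)$ is a fixed constant bounded below (I would take it to be at least $\tfrac14$), while $a$ and $c$ retain positive mass so that the anchor point assumption still holds. I expect the only genuinely delicate step to be the indistinguishability argument: one must justify carefully that $\sgn(\hat{f}_n(b))$ has identical distribution under $\tilde{P}^{+}_n$ and $\tilde{P}^{-}_n$, so that the two error events combine to total probability at least one. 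This is exactly where the additional flexibility of the ILN model over CCN is used, since it is what permits forcing $\tilde{P}^{+}=\tilde{P}^{-}$ while keeping $P^{+}$ and $P^{-}$ distinct and compliant with the margin and anchor assumptions; note that, because the noisy laws coincide exactly (zero KL divergence), no appeal to Birgé's Lemma~\ref{lem:birg} is needed for this lemma, and the remaining steps are routine substitutions of the constructed values.
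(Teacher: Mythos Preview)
Your proof is correct and shares the paper's overall architecture: the pointwise excess-risk decomposition, restriction to the distinguishing point $b$, and exploitation of the identity $\tilde{P}^{+}=\tilde{P}^{-}$. The one substantive difference is how you bound the sum of error probabilities at $b$. The paper converts $\hat{f}_n$ into a test $\hat{\phi}(\tilde{S}_n)=\tilde{P}^i_n$ when $\sgn(\hat{f}_n(b))=i$ and invokes Birg\'e's Lemma~\ref{lem:birg} (with zero KL) to obtain $\sum_i\bbe_{\tilde{P}^i_n}[\mathds{1}\{\hat{\phi}\neq\tilde{P}^i_n\}]\ge\tfrac{1}{3}$, hence a lower bound of $\tfrac{\rho}{6}\,\bbp(\bX=b)$, which then forces the choice $\bbp(\bX=b)=\tfrac{3}{4}$ to reach $\tfrac{\rho}{8}$. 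Your direct argument---that under the common noisy law $\sgn(\hat{f}_n(b))$ cannot simultaneously equal $+1$ and $-1$, so the two error indicators sum to at least $1$ pointwise---is more elementary and strictly sharper, yielding $\tfrac{\rho}{2}\,\bbp(\bX=b)$ and needing only $\bbp(\bX=b)\ge\tfrac{1}{4}$. Since the two noisy product laws coincide exactly, Birg\'e's lemma is indeed superfluous here, as you observe; the paper's appeal to it is an artifact of following the CCN template of \citet{reeve2019classification}, whereas your route makes the two-point Le~Cam nature of the construction explicit.
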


The proof of Theorem \ref{thm:lower} is straightforward from Lemma \ref{lem:lower bound}.
\begin{proof}[Proof of Theorem \ref{thm:lower}]
        \begin{align*}
        \underset{P, \tilde{P}}{\sup} \left\{ \mathbb{E}_{S_n \sim \tilde{P}_n} \left[ \calr(\hat{f}_n) \right] - \calr(f^*) \right\} \geq \frac{1}{2} \sum_{i \in \{-,+\}} \left( \bbe_{S_n \sim \tilde{P}^i_n}\left[ \calr(\hat{f}_n) \right] - \calr(f^{*,i}) \right) \geq \frac{\rho}{16},
    \end{align*}
    where the last inequality comes from Lemma \ref{lem:lower bound}. Since, the above inequality holds for given $\hat{f}_n$, Theorem \ref{thm:lower} is valid. 
\end{proof}

\section{Conclusion}\label{sec:conclusion}

This paper provides a theoretical analysis of \textit{instance and label dependent} noise model. We derived an upper bound for the excess risk of the empirical risk minimizer when using noisy samples. Furthermore, we showed that the minimax risk of the 0-1 loss is lower bounded by a constant that is proportional to the noise level. These results imply that the empirical risk minimizer achieves the optimal excess risk bound in terms of the noise level, despite its simplicity compared with other noise-tolerant methods. Furthermore, the lower bound result prompts us to consider what additional assumptions are required to make learning with ILN possible and some justification for using clean samples, as learning solely with noisy samples is not possible.
These theoretical findings validate numerous works on the ILN model, which assumed stronger assumptions and used human experts to clean noisy samples.

\bibliographystyle{abbrvnat}
\bibliography{ref.bib}

\newpage
\begin{appendix}

\section{Proofs}
\subsection{Proof for Lemma \ref{lem:eta}}
\begin{proof}
For given $x \in \calx$, we have
\begin{alignat*}{3}
    \bbp(\tilde{y}=1|\bX=\bx) &= &&\bbp(\tilde{y}=1,y=+1|\bX=\bx) + \bbp(\tilde{y}=1,y=-1|\bX=\bx) \\
    &= &&\bbp(\tilde{y}=1|y=+1,\bX=\bx)\cdot \bbp(y=+1|\bX=\bx) \\
    & &&+ \bbp(\tilde{y}=1|y=-1,\bX=\bx)\cdot\bbp(y=-1|\bX=\bx) \\
    &= &&(1- \rhop(\bx))\bbp(y=+1|\bX=\bx) + \rhom(\bx)\bbp(y=-1|\bX=\bx)\\
    &= &&(1- \rhop(\bx))\bbp(y=+1|\bX=\bx) + \rhom(\bx)(1-\bbp(y=+1|\bX=\bx))\\
    &= &&(1-\rhop(\bx) -\rhom(\bx))\bbp(y=1|\bX=\bx) + \rhom(\bx).\\
    &= &&(1-\rhop(\bx))\bbp(y=1|\bX=\bx) + \rhom(\bx)(1-\bbp(y=1|\bX=\bx))\\
    &= &&(1-\rhop(\bx))\eta(\bx) + \rhom(\bx)(1-\eta(\bx))
\end{alignat*}
The third equation follows from the definition of $\rhop(\bx)$ and $\rhom(\bx)$.
Similary, we have the following equation:
\begin{align}
    \bbp(\tilde{y}=-1|\bX=\bx) = (1-\rhop(\bx) -\rhom(\bx))\bbp(y=-1|\bX=\bx) + \rhop(\bx). \label{eqn:rhom}
\end{align}
\end{proof}
\subsection{Proof for Lemma \ref{lem:ub of clean-noisy}}
\begin{proof}
    Suppose $f\in \calf$ be given. We have
    \begin{alignat*}{3}
         \calr(f) - \tcalr(f) &= &&\int \ell(f(\bX),Y) dP - \int \ell(f(\bX),\tilde{Y}) d\tilde{P}\\
         &= &&\int \bbp (y=1|\bX) \cdot \ell(f(\bX),+1) + \bbp (y=-1|\bX) \cdot \ell(f(\bX),-1) dP_\bX\\
         & &&- \int \bbp (\tilde{y}=1|\bX) \cdot \ell(f(\bX),+1) + \bbp (\tilde{y}=-1|\bX) \cdot \ell(f(\bX),-1) dP_\bX.
    \end{alignat*}
    The first equation follows from the definition of the clean risk $\calr(f)$ and the noisy risk $\tcalr(f)$. The second equation is derived from $\bbe [\ell(f(\bX),Y)] = \bbe [\bbe[\ell(f(\bX),Y)|\bX]]$.

    Given that both terms in the preceding equation are integrated over $\bX$, we can combine the two terms accordingly:
    \begin{alignat*}{3}
        \calr(f) - \tcalr(f) &= &&\int \Big\{ \bbp (y=1|\bX) \cdot \ell(f(\bX),+1) -  \bbp (y=-1|\bX) \cdot \ell(f(\bX),-1) \\
         & &&- \bbp (\tilde{y}=1|\bX) \cdot \ell(f(\bX),+1) - \bbp (\tilde{y}=-1|\bX) \cdot \ell(f(\bX),-1) \Big\} dP_\bX. 
    \end{alignat*}
    Using \eqref{eqn:rhop} and \eqref{eqn:rhom} and rearranging the terms, we have
    \begin{alignat*}{3}
        \calr(f) - \tcalr(f) &= && \int \big\{(\rhop(\bX) + \rhom(\bX))\eta(\bX)\ell(f(\bX),+1) - \rhop(\bX) \ell(f(\bX),-1)  \big\}\\
        & &&+ \big\{(\rhop(\bX) + \rhom(\bX))\left(1-\eta\left(\bX\right)\right)\ell(f(\bX),-1)  - \rhom(\bX) \ell(f(\bX),+1) \big\} dP_\bX.
    \end{alignat*}
    Let us focus on the first bracket term within the integral. For given $\bx \in \calx$, we have 
    \begin{align*}
        &(\rhop(\bx) + \rhom(\bx))\eta(\bx)\ell(f(\bx),+1) - \rhop(\bX) \ell(f(\bX),-1) \\
        &= (\rhop(\bx) + \rhom(\bx))\left(\eta(\bx)\ell(f(\bx),+1)- \frac{\rhop(\bx)}{\rhop(\bx)+\rhom(\bx)} \ell(f(\bx),-1)\right)
    \end{align*}
    We can split the second bracket term as follows
    \begin{align*}
        &\eta(\bx)\ell(f(\bx),+1)- \frac{\rhop(\bx)}{\rhop(\bx)+\rhom(\bx)} \ell(f(\bx),-1)= 
         \underbrace{\left(\eta(\bx)\ell(f(\bx),+1)- \frac{\rhop(\bx)}{\rhop(\bx)+\rhom(\bx)} \ell(f(\bx),+1)\right)}_{=(A)} \\
        &+\left(\frac{\rhop(\bx)}{\rhop(\bx)+\rhom(\bx)} \ell(f(\bx),+1) -\frac{\rhop(\bx)}{\rhop(\bx)+\rhom(\bx)} \ell(f(\bx),-1)\right)\\
        &\leq (A) + \frac{C\rhop(\bx)}{\rhop(\bx)+\rhom(\bx)}. 
    \end{align*}
    The last inequality comes from Assumption \ref{assm:constant C}. Similarly, the second bracket term within the integral can be upper bounded by
    \begin{align*}
        &(1-\eta(\bx))\ell(f(\bx),-1)  - \frac{\rhom(\bx)}{\rhop(\bx)+\rhom(\bx)} \ell(f(\bx),+1)\\
        &\leq \underbrace{\left((1-\eta(\bx))\ell(f(\bx),-1)- \frac{\rhom(\bx)}{\rhop(\bx)+\rhom(\bx)} \ell(f(\bx),-1)\right)}_{=(B)} + \frac{C\rhom(\bx)}{\rhop(\bx)+\rhom(\bx)}.
    \end{align*}
    Combining the above result, we have
    \begin{align*}
        \calr(f) - \tcalr(f) \leq \int (\rhop(\bx)+\rhom(\bx))\left( (A) + (B) + C \right) dP_\bx.
    \end{align*}
    Since $C$ is a constant, it remains to bound $(A) + (B)$.
    \begin{align}
        &(A) + (B) \notag \\
        &=\left( \eta(\bx) - \frac{\rhop(\bx)}{\rhop(\bx)+\rhom(\bx)} \right) \ell(f(\bx),+1) + \left(1- \eta(\bx) - \frac{\rhom(\bx)}{\rhop(\bx)+\rhom(\bx)} \right)\ell(f(\bx),-1)\notag \\
        &=\left( \eta(\bx) - \frac{\rhop(\bx)}{\rhop(\bx)+\rhom(\bx)} \right) \ell(f(\bx),+1) - \left(\eta(\bx) - \frac{\rhop(\bx)}{\rhop(\bx)+\rhom(\bx)} \right)\ell(f(\bx),-1) \notag\\
        &=\left( \eta(\bx) - \frac{\rhop(\bx)}{\rhop(\bx)+\rhom(\bx)} \right) \left( \ell(f(\bx),+1) - \ell(f(\bx),-1)  \right) \notag\\
        &\leq \left| \eta(\bx) - \frac{\rhop(\bx)}{\rhop(\bx)+\rhom(\bx)} \right| C \label{eqn:first}
    \end{align}
    The last inequality comes from Assumption \ref{assm:constant C}. Instead of binding by common factor $\eta(\bx) - \frac{\rhop(\bx)}{\rhop(\bx)+\rhom(\bx)}$ in the third equation, if we use $\eta(\bx) - \frac{\rhop(\bx)}{\rhop(\bx)+\rhom(\bx)} = \eta(\bx)-1 + \frac{\rhom(\bx)}{\rhop(\bx)+\rhom(\bx)}$, then we have
    \begin{align}
        (A) +(B)&\leq \left(\eta(\bx)-1 + \frac{\rhom(\bx)}{\rhop(\bx)+\rhom(\bx)}\right) \left( \ell(f(\bx),+1) - \ell(f(\bx),-1) \right) \notag \\
        &= \left(1- \eta(\bx) - \frac{\rhom(\bx)}{\rhop(\bx)+\rhom(\bx)}\right)
        \left( \ell(f(\bx),-1) - \ell(f(\bx),+1)\right) \notag \\ 
        &\leq \left|1- \eta(\bx) - \frac{\rhom(\bx)}{\rhop(\bx)+\rhom(\bx)} \right| C \label{eqn:second}
    \end{align}
    Using \ref{eqn:first} and \eqref{eqn:second}, we get
    \begin{align*}
        (A) +(B) \leq \frac{C \min\{\rhop(\bx),\rhom(\bx)\}}{\rhop(\bx) +\rhom(\bx)} \leq \frac{C}{2},
    \end{align*}
    where the last inequality comes from $\rhop(\bx) + \rhom(\bx) < 1$.
    As a result, we have 
    \begin{align*}
         \calr(f) - \tcalr(f) \leq \frac{3C}{2} \int \rhop(\bX)+\rhom(\bX) dP_\bX.
    \end{align*}
    Since $f$ is given, we have
    \begin{align*}
         \underset{f \in \calf}{\sup} \big\{ \calr(f) - \tcalr(f) \big\} \leq \frac{3C}{2} \int \rhop(\bX)+\rhom(\bX) dP_\bX \leq \frac{3C\rho}{2}.
    \end{align*}
\end{proof}

\subsection{Proof for Theorem \ref{thm:upper}}
\begin{proof}
First of all, we have
    \begin{align*}
    \calr(\nes) - \calr (f_n) &= \calr(\nes) - \tilde{\calr}(\nes) +\tilde{\calr}(\nes) -  \calr (f_n)\\
    &\leq \underset{f \in \calf}{\sup} \ \left\{ \calr(f) - \tilde{\calr}(f) \right\} +\tilde{\calr}(\nes) -  \calr (f_n) \\
    &\leq \frac{3C\rho}{2} +\tilde{\calr}(\nes) -  \calr (f_n).
\end{align*}
The last inequality comes from Lemma \ref{lem:ub of clean-noisy}. We decompose the second term in the above inequality to
\begin{align*}
    \tilde{\calr}(\nes) -  \calr (f_n) = \underbrace{\tilde{\calr}(\nes) - \tilde{\calr}_n (\nes)}_{(a)} + \tilde{\calr}_n (\nes) - \calr_n (f_n) + \underbrace{\calr_n (f_n) -  \calr (f_n)}_{(b)}. 
\end{align*}
The following holds with probability at least $1-2\delta$ by Assumption \ref{assm:ERM}:
\begin{align*}
    (a)+(b) \leq 2G_\delta(n).
\end{align*}
Thus, with probability at least $1-\delta$, we get
\begin{align*}
    \tilde{\calr}(\nes) -  \calr (f_n) \leq  \frac{3C\rho}{2} + 2G_\delta(n) + \tilde{\calr}_n (\nes) - \calr_n (f_n).
\end{align*}
It remains to bound the middle term $ \tilde{\calr}_n (\nes) - \calr_n (f_n)$. We have
\begin{align*}
    \tilde{\calr}_n (\nes) - \calr_n (f_n) &= \underbrace{\tilde{\calr}_n (\nes) - \tilde{\calr}_n (f_n)}_{\leq 0} +\tilde{\calr}_n (f_n)- \calr_n (f_n)\\
    &\leq \underset{f \in \calf}{\sup} \ \tilde{\calr}_n (f) - \calr_n (f)\\
    &\leq \underset{f \in \calf}{\sup} \tilde{\calr}(f) - \calr(f)\\
    &\leq  \frac{3C\rho}{2}.
\end{align*}
The first inequality comes from the optimality of $\nes$ and the second equality comes from $\max_i\{a_i + b_i\} \leq \max_i \{a_i\} + \max_i \{b_i\}$. Therefore, we obtain
\begin{align*}
    \calr(\nes) - \calr(f_n) \leq 3C\rho + 2G_\delta(n),
\end{align*}
with probability at least $1-2\delta$.
\end{proof}

\subsection{Proof for Lemma \ref{lem:lower bound}}
\begin{proof}
Suppose $\tilde{S}_n$ be the given $n$ noisy samples, but we do not know whether $\tilde{S}_n$ came from $\tilde{P}^-_n$ or $\tilde{P}^+_n$.
    We want to construct an estimator $\hat{\phi}: \left( \calx \times \caly\right)^n \rightarrow \{\tilde{P}^-_n,\tilde{P}^+_n\}$, which estimates the originating  distribution of the samples, using the function $\hat{f}_n$ that is trained using $\tilde{S}_n$. We choose $\hat{\phi}\left(\tilde{S}_n \right) = \tilde{P}^i_n$, if $\sgn(\hat{f}_n(b)) = i$. Since, $\eta^-(b) < \frac{1}{2}$ and $\eta^+(b) >\frac{1}{2}$, Bayes optimal classifier, $\sgn(f^{*,i}(\bx))$ for each distribution $P^i$ satisfies $\sgn(f^{*,i}(b)) = i$ for $i \in \{-,+\}$.
    Then, we have 
    \begin{align*}
        \sum_{i \in \{-,+\}} \bbe_{\tilde{S}_n\sim \tilde{P}^i_n} \left[ \mathds{1}\left\{\sgn( \hat{f}_n(b)) \neq \sgn(f^{*,i}(b)) \right\} \right] = \sum_{i \in \{-,+\}} \bbe_{\tilde{S}_n\sim \tilde{P}^i_n} \left[\mathds{1} \left\{ \hat{\phi}\left(\tilde{S}_n \right) \neq \tilde{P}_n^i \right\} \right],
    \end{align*}
    by definition of $\hat{\phi}$. Since $\tilde{P}^-_n$ and $\tilde{P}^+_n$ are the same, we have $D(\tilde{P}^-_n\| \tilde{P}^+_n) = 0$. By Lemma \ref{lem:birg}, we have
    \begin{align}
        \sum_{i \in \{-,+\}} \bbe_{\tilde{S}_n\sim \tilde{P}^i_n} \left[\mathds{1} \left\{ \hat{\phi}\left(\tilde{S}_n \right) \neq \tilde{P}_n^i \right\} \right] \geq \frac{1}{3}. \label{eqn:temp1}
    \end{align}
    Also, the following holds for any $\tilde{P}_n$ measurable $f$ and $i \in \{-1,+1\}$.
    \begin{align*}
        \calr^{0/1}\left( f \right) - \calr^{0/1}\left( f^{*,i} \right) &= \sum_{\bx \in \calx} \bbp(\bX=\bx) \left| 2\eta^i(\bx) -1 \right|\mathds{1}\left\{ \sgn(f(\bx)) \neq \sgn(f^{*,i}(\bx))  \right\} \\
        &\geq \bbp(\bX=b)|2\eta^i(b) -1 |\mathds{1}\left\{ \sgn(f(b)) \neq \sgn(f^{*,i}(b))  \right\} \\
        &\geq \bbp(\bX=b)\frac{\rho}{2} \mathds{1}\left\{\sgn( f(b)) \neq \sgn(f^{*,i}(b))  \right\},
    \end{align*}
    where the last inequality comes from \eqref{eqn:example lower bound}. Inputting $\hat{f}_n$ for $f$ and taking expectation over $\tilde{S}_n \sim \tilde{P}^i_n$ for $i \in \{-,+\}$, we get
    \begin{align*}
        \bbe_{\tilde{S}_n \sim \tilde{P}^i_n}\left[ \calr^{0/1}(\hat{f}_n) \right] - \calr^{0/1}\left(f^{*,i}\right) \geq\bbp(\bX=b)\cdot\frac{\rho}{2} \cdot \bbe_{\tilde{S}_n \sim \tilde{P}^i_n}\left[\mathds{1}\left\{ \sgn(f(b)) \neq \sgn(f^{*,i}(b))  \right\} \right].
    \end{align*}
    Summing the above inequality for $i \in \{-,+\}$ and using \eqref{eqn:temp1},  we get
\begin{align*}
    \sum_{i\in \{-,+\}} \left( \bbe_{\tilde{S}_n \sim \tilde{P}^i_n}\left[ \calr^{0/1}(\hat{f}_n) \right] - \calr^{0/1}\left(f^{*,i}\right) \right)\geq \frac{\bbp(\bX=b)\cdot \rho}{6}.
\end{align*}
As we can choose $\bbp(\bX=b) =\frac{3}{4}$, we get the desired result. 
\end{proof}
\end{appendix}

\end{document}